\def\eqref#1{equation~\ref{#1}}
\def\1{\bm{1}}
\DeclareMathAlphabet{\mathsfit}{\encodingdefault}{\sfdefault}{m}{sl}
\SetMathAlphabet{\mathsfit}{bold}{\encodingdefault}{\sfdefault}{bx}{n}
\newcommand{\E}{\mathbb{E}}
\DeclareMathOperator*{\argmax}{arg\,max}
\DeclareMathOperator*{\argmin}{arg\,min}
\newcommand*{\eg}{e.g.\@\xspace}
\newcommand*{\ie}{i.e.\@\xspace}
\newcommand*{\wrt}{w.r.t.\@\xspace}
\newcommand\norm[1]{\left\lVert#1\right\rVert}
\newcommand{\ones}{\mathbbm{1}}
\newcommand{\q}{\mathbb{Q}}
\newcommand{\p}{\mathbb{P}}
\newcommand{\nf}{n_a}
\newcommand{\np}{n_n}
\newcommand{\pp}{p_n}
\newcommand{\pf}{p_a}
\newcommand{\Kn}{K_n}
\newcommand{\Kf}{K_a}
\newcommand{\Ep}{E_n}
\newcommand{\Ef}{E_a}
\newcommand{\tchange}{c}
\newcommand{\CMSthr}{\epsilon}
\newcommand{\TREND}{\text{TREND}}
\newcommand{\SEAS}{\text{SEAS}}
\newcommand{\LIN}{\text{LIN}}
\newcommand{\TCN}{\text{TCN}}
\newcommand{\HP}{\text{HP}}
\newcommand{\tstop}{\text{stop}}
\newcommand{\pred}{\text{pred}}
\newcommand{\detector}{\text{det}}
\newcommand{\gengap}{Gap.\@\xspace}
\newcolumntype{C}{>{\centering\arraybackslash}X}
\newcommand{\modelcolwidth}{3.4 cm}
\newcommand{\best}[1]{\textbf{#1}}
\newcommand{\topleftdesc}[2]{\multirow{#2}{*}{ \rotatebox{90}{ \textbf{#1} }  } & }
\newcommand{\sidedescwidth}{0.2cm}
\newtheorem{theorem}{Theorem}[section]
\newtheorem{proposition}[theorem]{Proposition}
\newtheorem{proof}[theorem]{Proof}
\title{Stacked Residuals of Dynamic Layers for Time Series Anomaly Detection}
\author{Antiquus S.~Hippocampus, Natalia Cerebro \& Amelie P. Amygdale \thanks{ Use footnote for providing further information
about author (webpage, alternative address)---\emph{not} for acknowledging
funding agencies.  Funding acknowledgements go at the end of the paper.} \\
Department of Computer Science\\
Cranberry-Lemon University\\
Pittsburgh, PA 15213, USA \\
\texttt{\{hippo,brain,jen\}@cs.cranberry-lemon.edu} \\
\And
Ji Q. Ren \& Yevgeny LeNet \\
Department of Computational Neuroscience \\
University of the Witwatersrand \\
Joburg, South Africa \\
\texttt{\{robot,net\}@wits.ac.za} \\
\AND
Coauthor \\
Affiliation \\
Address \\
\texttt{email}
}
\begin{document}

\maketitle

\begin{abstract}
We present an end-to-end differentiable neural network architecture to perform anomaly detection in multivariate time series by incorporating a Sequential Probability Ratio Test on the prediction residual.
The architecture is a cascade of dynamical systems designed to separate linearly predictable components of the signal such as trends and seasonality, from the non-linear ones. The former are modeled by local Linear Dynamic Layers, and their residual is fed to a generic Temporal Convolutional Network that also aggregates global statistics from different time series as context for the local predictions of each one.
The last layer implements the anomaly detector, which exploits the temporal structure of the prediction residuals to detect both isolated point anomalies and set-point changes. It is based on a novel application of the classic CUMSUM algorithm, adapted through the use of a variational approximation of $f$-divergences. 
The model automatically adapts to the time scales of the observed signals. It approximates a SARIMA model at the get-go, and auto-tunes  to the statistics of the signal and its covariats, without the need for supervision, as more data is observed. 
The resulting system, which we call STRIC, outperforms both state-of-the-art robust statistical methods and deep neural network architectures on multiple anomaly detection benchmarks.
\end{abstract}

\section{Introduction}

Time series data are being generated in increasing volumes in industrial, medical, commercial and scientific applications. Such growth is fueling demand for unsupervised anomaly detection algorithms  \citep{deepant,tadgan,SMDataset}. In large-scale applications, a number of time series often exhibit trends and seasonal changes over different time intervals from hourly to yearly. In addition to such ``simple'' phenomena, there may be complex correlations both within and across time series that must be captured in order to determine that an anomaly has occurred.  While recent developments have focused on powerful deep neural network (DNN) architectures, especially Transformers \citep{anomaly_transformer}, simple linear models are still preferred where dataset-specific tuning is impractical \citep{ad_survey} and interpretability of failure modes is desired \citep{tadgan, SMDataset}.  
While powerful, general DNNs, and Transformers in particular, do not natively possess the inductive biases that are beneficial for modeling time series.

We introduce a novel architecture where each layer is a dynamical system. Just like convolutional layers are a particular subset of fully connected ones designed to capture locality as an inductive bias, our Dynamic Layers are subsets of convolutional ones designed to capture causality as an inductive bias: the output sequence of each layer at a given time only depends on past values of the input sequence. To explicitly capture trends and seasonal components, thus improving interpretability, early Dynamic Layers model {\em linearly predictable} phenomena, and pass on their prediction residual to global non-linear Dyanamic Layers.
To train our architecture, we design a novel fading memory regularizer that allows the model to consider large intervals of past data (larger than the time scales of the underlying processes) and automatically selects the relevant past to avoid overfitting. The optimal estimated scale provides users with valuable insights on the characteristic temporal scales of the data. 

The resulting model, which we call STRIC: Stacked Residuals of Dynamic Layers, is differentiable and trained end-to-end with a predictive loss. At inference time, the prediction residual  is used in a sequential probability ratio test (SPRT) in order to detect anomalies \cite{abrupt_changes_book}. The SPRT we use is based on the classic CUMSUM algorithm, but modified to avoid estimating the distribution of residuals, otherwise required to compute the cumulative test statistics. Insead, we  directly estimate the likelihood ratios with a variational characterization of $f$-divergences by solving a convex risk minimization problem in closed form.

STRIC differs from prior work in the architecture (design of Dynamic Layers, \Cref{sec:tsmodel}), in the regularization (fading memory, \Cref{sec:automatic_complexity_determination}), and in the decision function (non-parametric SPRT, \Cref{sec:admodel}). These developments allow STRIC to couple the robustness of simple linear models with the flexibility of non-linear ones. At the same time, STRIC is not affected by the drawbacks of generic non-linear models, such as their tendency to overfit and their fragility to covariate shift.
Summarizing, our main contributions are: 
\begin{enumerate}
    \item A novel stacked residual architecture with interpretable components that is naturally hierarchical and trained end-to-end: The first Linear Dynamic Layer (LDL) models trends, the second LDL models quasi-periodicity/seasonality at multiple temporal scales, the third LDL is a general linear predictor, and the fourth is a non-linear Dynamic Layer in the form of a Temporal Convolution Network (TCN). While the first three layers are {\em local} to each component of the time series, the last integrates {\em global} statistics across different time series (covariates). Our design of LDLs does not involve an explicit state-space model \cite{backpropKF, EchoNets}, and instead efficiently projects the input time series onto a basis of the spectral representation of the space of linear dynamical systems.
    \item A novel regularization scheme for the prediction loss which allows automatic complexity selection according to the Empirical Bayes framework \citep{GP_rasmussen}. Learning entails back-propagating through a convex optimization \citep{amos2017optnet, metaoptnet} which induces fading memory in the TCN, incentivizing LDLs to model long-term local statistics and leaving the task of modeling global residual correlations to the TCN.
    \item A novel non-parametric extension of the CUMSUM algorithm which enables its use without having to estimate the distribution of residuals and without the need to introduce unrealistic modeling/dataset-specific assumptions. To the best of our knowledge, STRIC is the only deep learning-based method that forgoes a linear classifier rule in favor of CUMSUM. 
\end{enumerate}

\section{Related work}
A {\em time series} is an ordered sequence of data points. We focus on discrete and regularly spaced time indices, and thus we do not include literature specific to asynchronous time processes in our review.
Categorizing by discriminant function, unsupervised anomaly detection paradigms roughly include: density-estimation, clustering-based and reconstruction-based methods. 

Density-estimation methods exploit an estimate of the local density and local connectivity as a discriminant function: the anomaly score of each datum is measured by the degree of isolation from the surrounding data \citep{LOF}. Clustering-based methods instead build clusters of normal data and the anomaly score of an instance is computed as the distance to the nearest cluster center \citep{ts_clustering, matrix_profile, Temporal-hierarchical-oneclass}. Reconstruction-based methods detect anomalies based on the reconstruction error of a ``normal'' model of the time series. In these methods it is common to use statistics of the prediction error as the discriminant \citep{ad_survey}, and in particular test the likelihood ratio between the distribution of the prediction error before and after a given time instant, to determine if that instant corresponds to an anomaly \citep{cumsum_correlated}. 
Recent methods  use deep neural network architectures and the Euclidean distance among activation vectors as a discriminant \citep{deepant, tadgan, SMDataset, bashar2020tanogan}. 

STRIC falls into the class of reconstruction-based  methods, to which it contributes in two ways by introducing:
(i) an architecture to compute the prediction residual, based on a novel Dynamic Layer design, and 
(ii) a novel decision function that extends the classic CUMSUM algorithm \cite{abrupt_changes_book} to operate without explicit knowledge of the residual distribution.
As a result of these design choices, STRIC is {\em interpretable}:it explicitly models trends and seasonality without reducing the representation power of the overall model; it is {\em flexible}, in the sense of being intrinsically multi-scale and auto-tuning. Specifically, at initialization, STRIC implements a multi-scale SARIMA model \citep{time-series-forecasting}, which can function  out-of-the-box on a wide variety of time series. As more data is observed, including from covariate time series, the model adapts without the need for supervision, in an end-to-end fashion. 

\citet{deepant} argue that anomaly detection can be solved by exploiting a flexible model such as TCN, with a proper inductive bias. However, in  \Cref{sec:TCN_vs_TRIAD_appendix} we show that a TCN alone can overfit simple time series. We therefore focus on providing the architecture with temporal ordering as an explicit inductive bias, like \citep{TCN_Koltun, think_globally, nit-interpretability, guen2020augmenting}. However, unlike prior work, our temporal model has hierarchical structure \citep{nbeats} and performs regularization {\em not} by maintaining an explicit finite-dimensional state, but by enforcing fading memory \citep{LZ_Novel_DNN_Fading} while retaining the information that is needed to predict future values.

\section{Notation}
We denote vectors with lower case and matrices with upper case.  In particular $y$ is a multi-variate time series  $\{y(t)\}_{t \in \mathbb{Z}}$, $y(t) \in \mathbb{R}^n$; we stack observations from time $t$ to $t+k-1$ and denote the resulting matrix as $Y_t^{t+k-1}:= [y(t), y(t+1),..., y(t+k-1)] \in \mathbb{R}^{n \times k}$. 
We denote the $i$-th component of $y$ as $y_i$ and its value at time $t$ as $y_i(t) \in \mathbb{R}$.
We refer to intervals in $\{y(s), \ s>t\}$ as  future/test and those in $\{y(s), \ s\leq t\}$ as past/reference. At time $t$, sub-sequences containing the $n_p$ past samples up to time $t - n_p + 1$ are given by $Y_{t-n_p+1}^t$ (note that we include the present data into the past data), while future samples up to time $t + n_f$ are $Y_{t+1}^{t+n_f}$. We will use past data to predict future ones, where the length of past and future intervals are design hyper-parameters. 

\section{Temporal Residual Architecture}\label{sec:tsmodel}
We now describe STRIC's main design principles. Each block maps sequence-to-sequence causally, {\em i.e.,} its output at a given time only depends on its past inputs. This could be done by ``unrolling'' a state-space model \citep{backpropKF, EchoNets}, where the state encodes the memory of past data. This is ineffective as distant memories are diluted in the state update and training becomes difficult \cite{LSTM_Sch}. 
We follow \citep{TCN_Koltun} and instead use causal convolutions with a fixed-size 1-D kernel to capture long range dependencies. However, both our architecture,  depicted in \Cref{fig:architecture_sketch}, and the training procedure, which we describe next, differ from prior work.

\begin{figure*}[t]
        \centering
    \begin{minipage}{.5\linewidth}
        \centering
        \includegraphics[width=0.9\linewidth]{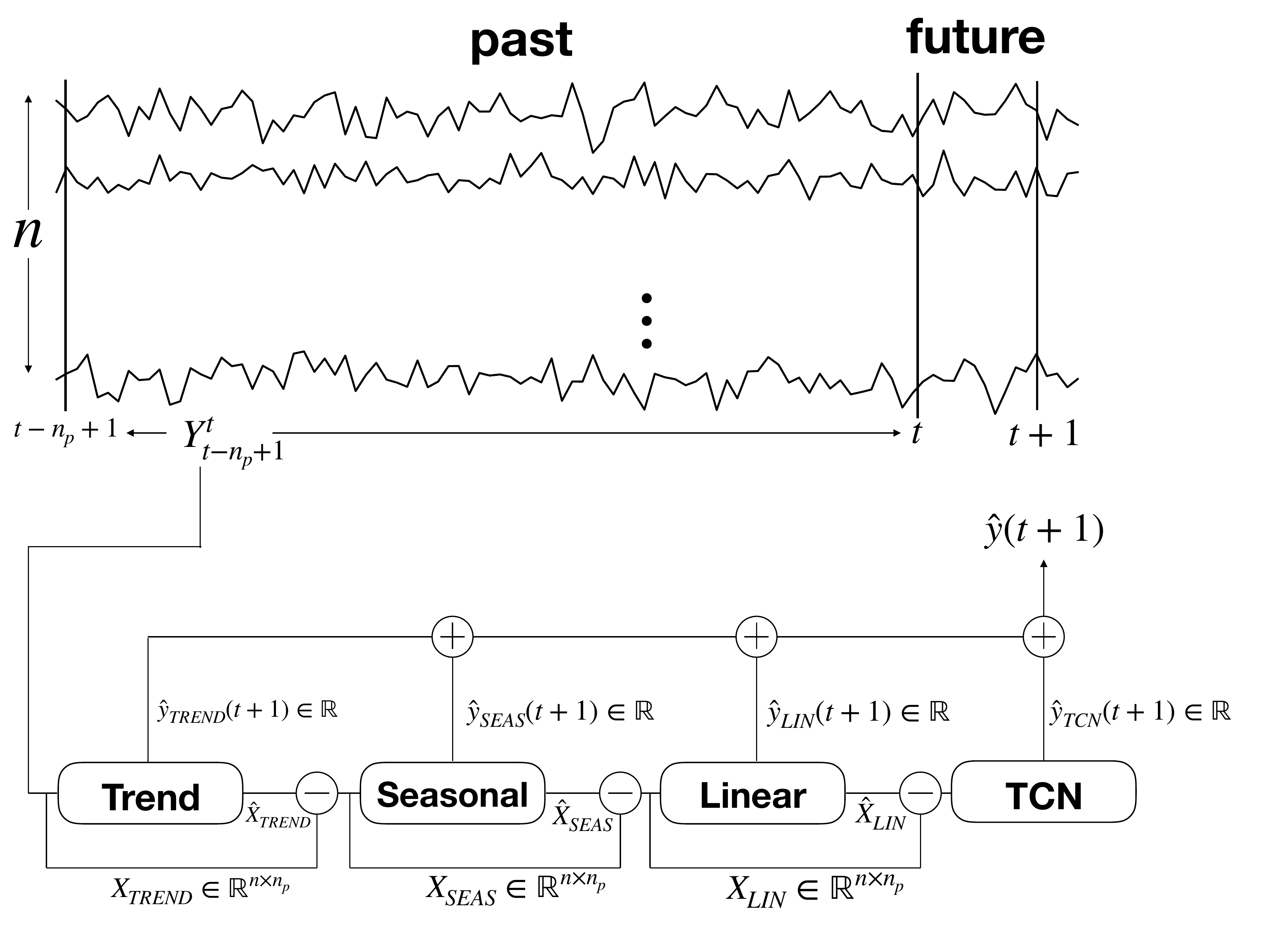} 
        \caption{STRIC predictor architecture.} \label{fig:architecture_sketch}
    \end{minipage}
    \begin{minipage}{.49\linewidth}
        \centering
        \includegraphics[width=0.9\linewidth]{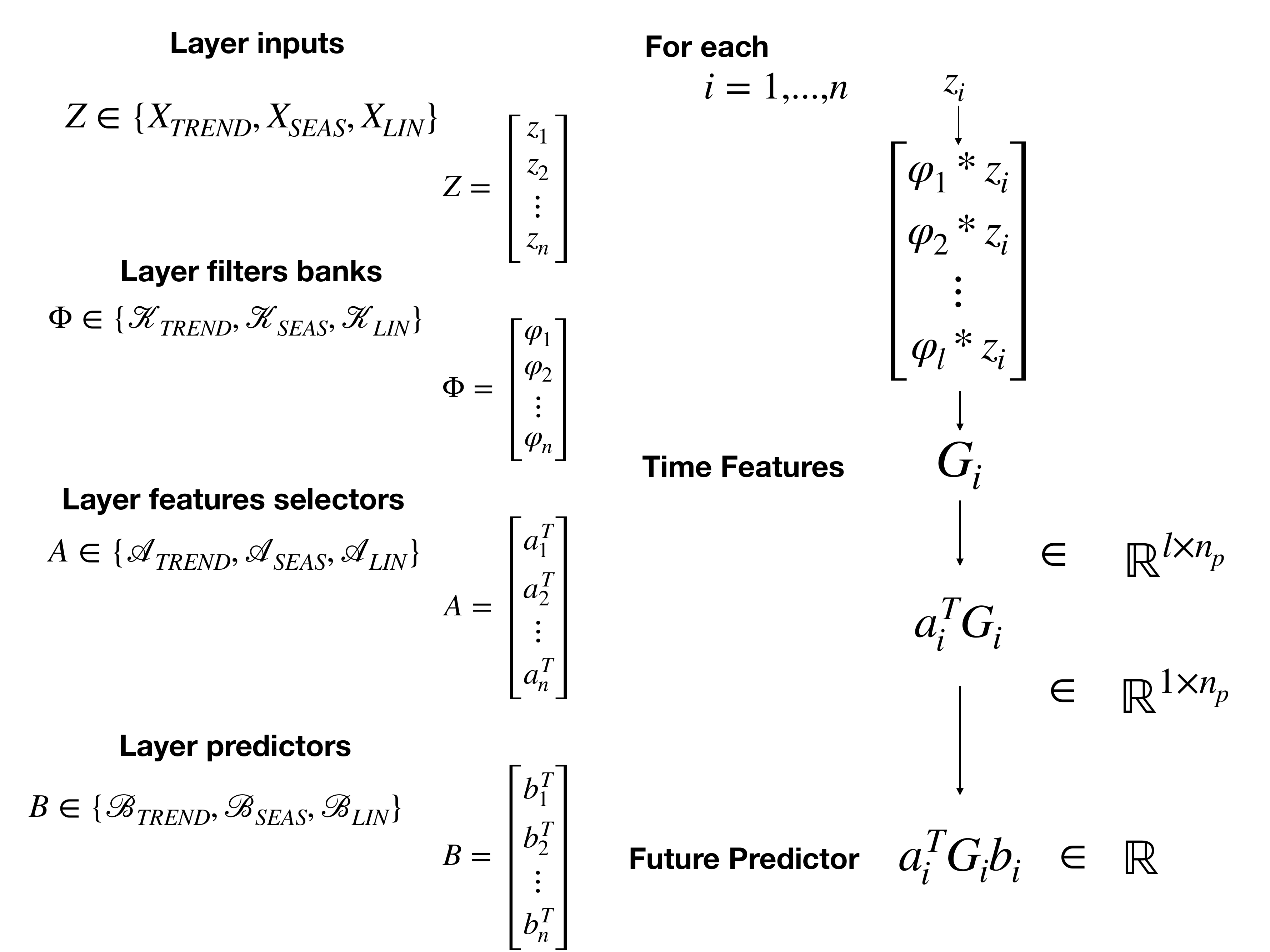} 
        \caption{\textbf{Linear Dynamic Layers}. Time features are extracted independently for each time series (see \Cref{sec:implementation} for more details).}
    \end{minipage}
\end{figure*}

\textbf{Linear Dynamic Layer.} 
Rather than an explicit state space, or a fixed convolution kernels, we model linear dynamics  using a large but fixed set of randomly chosen (marginally) stable finite dimensional impulse responses \citep{random_filter_bank}. 
In particular the first linear dynamic layer models and removes slow-varying components in the input data. We initialize the filters to mimic a causal Hodrick Prescott filter \citep{HP_filter} with poles close to one.
The second linear dynamic layer models and removes periodic components: it is initialized with poles on the unit circle (periodic impulse response).
Finally, the third layer implements a linear stationary filter bank whose impulse response has poles in the complementary region within the unit circle relative to previous linear layers. See \Cref{sec:implementation} for more details regarding the initialization strategy and the residual connections.

\textbf{Non-linear Dynamic Layer.} The {\em Non-linear} Dynamic Layer module aggregates global statistics from different time series using a TCN model \citep{think_globally}. It takes as input the prediction residual of the linear layers and outputs a matrix $G(Y_{t-n_p+1}^t) \in \mathbb{R}^{l \times n_p}$ where $l$ is the number of output features extracted by the TCN model. The column $G(Y_{t-n_p+1}^t)_j$ with $j=1,...,n_p$ of the non-linear features is computed using data up to time $t-n_p+j$ thanks to the causal internal structure of a TCN network \citep{TCN_Koltun}. We build a linear predictor on top of $G(Y_{t-n_p+1}^t)$ for each single time series independently: the predictor of the $i$-th time series is given by: $\hat{y}_{\TCN}(t+1)_{ i} := a_i^T G(Y_{t-n_p+1}^t) b_i$ where $a_i \in \mathbb{R}^{l}$ and $b_i \in \mathbb{R}^{n_p}$. 
Since $a_i$ combines features (uniformly in time) we can interpret it as a feature selector. While $b_i$ aggregates relevant features across time indices to build the one-step ahead predictor (see \Cref{sec:implementation} for further details). As anticipated, this is a superset model of the preceding layers, making the overall architecture redundant. Therefore, we introduce a novel regularization which incentivizes the Non-linear Dynamic Layer to model the most recent past of the global residuals and leaves LDLs to model long term local statistics.

\subsection{Automatic Complexity Determination}
\label{sec:automatic_complexity_determination}

Let the TCN-based future predictor be $\hat{y}_{\TCN}(t+1) := a^T G(Y_{t-n_p+1}^t) b = \hat{X}_{\TCN} b$ where $\hat{X}_{\TCN} \in \mathbb{R}^{1 \times n_p}$ is the output of the TCN block which depends on the past window $Y_{t-n_p+1}^t$ of length $n_p$ (the memory of the predictor).\footnote{For simplicity of notation, we describe scalar time series, leaving the multivariate case to (\Cref{sec:TRIAD_multivariate}).}
Ideally, $n_p$ should be large enough to capture the ``true'' memory of the time series, but should not be too large if not necessary, according to the desired bias variance trade-off. 
In this section, we introduce a novel regularized loss inspired by Bayesian arguments which allows us to use an architecture with a ``large enough'' past horizon $n_{p}$ (larger than the true system memory) and automatically select the {\em relevant past} to avoid overfitting. Such information is exposed to the user through an interpretable parameter $\lambda$ that directly measures the \textit{relevant time scale} of the signal.

\textbf{Bayesian learning formulation}:\label{sec:bayesian-learning-formulation}
The ideal model would yield an innovation process (residual prediction error) that is white and Normal. Accordingly, $y(t+1)\mid Y_{t-n_p+1}^t \sim \mathcal{N}(F^*(Y_{t-n_p+1}^t)), \eta^2)$ where $F^*$ is the optimal predictor of the future values given the past.
Note that this modeling assumption does not restrict our framework and is used only to justify the use of the squared loss to learn the regression function of the predictor.
In practice, we do not know $F^*$ and we approximate it with our parametric predictor. For ease of exposition, we group all architecture parameters except $b$ in $W$ (linear filters parameters, TCN kernel parameters etc.) and write the conditional likelihood  of the future given the past data of our predictor as $p(Y_{t+1}^{t+n_f} \mid b, W, Y_{t-n_p+1}^t) = \prod_{k=1}^{n_f}p(y(t+k) \mid b, W, Y_{t+k-n_p}^{t+k-1})$.

To simplify, we call  $Y_f:=Y_{t+1}^{t+n_f} \in \mathbb{R}^{n_f}$ the set of future outputs over which the predictor is computed and  $\hat{Y}_{b, W} \in \mathbb{R}^{n_f}$ the predictor's outputs.

The optimal set of parameters can be found by maximizing the posterior $p(b, W \mid Y_f)$ over the model parameters.
We model $b$ and $W$ as independent random variables: 
\begin{equation}\label{eqn:posterior}
    p(b, W \mid Y_f) \propto p(Y_f\mid  b, W) p(b) p(W)
\end{equation}
where $p(b)$ is the prior associated to the predictor coefficients and $p(W)$ is the prior on the remaining parameters. The prior $p(b)$ should encode our belief that the prediction model should not be too complex and should depend only on the {\em most relevant past}. 
We model this by assuming that the components of $b$ have zero mean and exponentially decaying variances: 
$\E b_{n_p - j - 1}^2  = \kappa \lambda^{j}$ for $j=0,...,n_p-1$, where $\kappa \in \mathbb{R}^+$ and $\lambda \in (0,1)$.
Under such constraints the maximum entropy prior $p_{\lambda, \kappa}(b)$ \citep{Cover} is $ \log(p_{\lambda, \kappa}(b)) \propto  -\norm{b}^2_{\Lambda^{-1}} - \log(|\Lambda|)$
where $\Lambda \in \mathbb{R}^{n_p}$ is a diagonal matrix with elements $\Lambda_{j,j} = \kappa \lambda^{j}$ with $j=0, ..., n_p-1$.
Here, $\lambda$ represents how fast the output of the predictor ``forgets'' the past. Therefore, $\lambda$ regulates the complexity of the predictor: the smaller $\lambda$, the lower the complexity.

In practice, $\lambda$ has to be estimated from the data. 
One would be tempted to estimate jointly $b, W, \lambda,\kappa$ (and possibly $\eta$) by minimizing the negative log of the joint posterior (see \Cref{sec:bayesian_formulation_appendix}). 
Unfortunately, this leads to a degeneracy since the joint negative log posterior goes to $-\infty$ when $\lambda \rightarrow 0$. 
Indeed, typically the parameters describing the prior (such as $\lambda$) are estimated by maximizing the marginal likelihood, i.e., the likelihood of the data once the parameters ($b, W$) have been integrated out. Since computing (or even approximating) the marginal likelihood in this setup is prohibitive, we now introduce a variational upper bound to the marginal likelihood which is easier to estimate.

\textbf{Variational upper bound to the marginal likelihood}:
The model structure we consider is linear in $b$ and we can therefore stack the predictions of each available time index $t$ to get the following linear predictor on the whole future data: $\hat{Y}_{b,W} = F_W b$ where $F \in \mathbb{R}^{n_f \times n_p}$ is obtained by stacking $\hat{X}_{\TCN}(Y_{i-n_p+1}^i)$ for $i=t,...,t+n_f-1$.

\begin{proposition}
Consider a model on the form: $\hat{Y}_{b, W} = F_W b$ (linear in $b$ and possibly non-linear in $W$) and its posterior in \Cref{eqn:posterior}. Assume the prior on the parameters $b$ is given by the maximum entropy prior and $W$ is fixed. Then the following is an upper bound on the marginal likelihood associated to the posterior in \Cref{eqn:posterior} with marginalization taken {\em only} w.r.t.\ $b$:
\begin{equation}\label{eqn:marginal_likelihood_upper_bound}
    \mathcal{U}_{b, W, \Lambda} = \frac{1}{\eta^2}{\norm{Y_f - \hat Y_{b,W}}^2}  + b^\top \Lambda^{-1} b \nonumber + \log \det( F_W \Lambda F_W^\top + \eta^2 I) 
    \end{equation}
\end{proposition}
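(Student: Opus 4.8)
The plan is to compute the marginal likelihood in closed form and then replace its data-fit term by a variational surrogate. Since the likelihood $p(Y_f \mid b, W) \propto \exp(-\tfrac{1}{2\eta^2}\norm{Y_f - F_W b}^2)$ and the maximum-entropy prior $p(b) \propto |\Lambda|^{-1/2}\exp(-\tfrac{1}{2} b^\top \Lambda^{-1} b)$ are both Gaussian in $b$, the marginal obtained by integrating out $b$ (with $W$ held fixed) is itself Gaussian: $Y_f \mid W \sim \mathcal{N}(0, \Sigma)$ with $\Sigma := F_W \Lambda F_W^\top + \eta^2 I$. Taking $-2\log(\cdot)$ and discarding the additive constant $n_f \log(2\pi)$, the negative log marginal likelihood equals $Y_f^\top \Sigma^{-1} Y_f + \log\det \Sigma$. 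The $\log\det\Sigma$ term already appears verbatim in $\mathcal{U}_{b,W,\Lambda}$, so the proposition reduces to bounding the quadratic form $Y_f^\top \Sigma^{-1} Y_f$ from above by $\tfrac{1}{\eta^2}\norm{Y_f - F_W b}^2 + b^\top \Lambda^{-1} b$ for every $b$.

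The key step is the variational identity $Y_f^\top \Sigma^{-1} Y_f = \min_b \big(\tfrac{1}{\eta^2}\norm{Y_f - F_W b}^2 + b^\top \Lambda^{-1} b\big)$, which I would establish by completing the square. Writing $A := \Lambda^{-1} + \eta^{-2} F_W^\top F_W$, which is positive definite since $\Lambda^{-1} \succ 0$, and $b^\star := \eta^{-2} A^{-1} F_W^\top Y_f$, a direct expansion gives $\tfrac{1}{\eta^2}\norm{Y_f - F_W b}^2 + b^\top \Lambda^{-1} b = (b - b^\star)^\top A (b - b^\star) + Y_f^\top(\eta^{-2} I - \eta^{-4} F_W A^{-1} F_W^\top) Y_f$. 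Because $A \succ 0$, the first term is nonnegative and vanishes only at $b = b^\star$, so the minimum over $b$ is exactly the constant second term. It then remains to identify that constant with $Y_f^\top \Sigma^{-1} Y_f$, i.e. to verify $\eta^{-2} I - \eta^{-4} F_W A^{-1} F_W^\top = (F_W \Lambda F_W^\top + \eta^2 I)^{-1}$, which is precisely the Woodbury matrix-inversion lemma applied to $\Sigma$.

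With the identity in hand the proposition is immediate: for any fixed $b$ the completion of the square shows $\tfrac{1}{\eta^2}\norm{Y_f - F_W b}^2 + b^\top \Lambda^{-1} b \ge Y_f^\top \Sigma^{-1} Y_f$, and adding $\log\det\Sigma$ to both sides yields $\mathcal{U}_{b,W,\Lambda} \ge Y_f^\top \Sigma^{-1} Y_f + \log\det\Sigma$, i.e. $\mathcal{U}_{b,W,\Lambda}$ upper-bounds the negative log marginal likelihood up to the constant $n_f\log(2\pi)$ that is independent of $b, W, \Lambda$.

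I expect the only real obstacle to be the Woodbury step that matches $\eta^{-2} I - \eta^{-4} F_W A^{-1} F_W^\top$ with $\Sigma^{-1}$; the rest is routine Gaussian algebra. A secondary point to state carefully is the normalization convention: the factors of $\tfrac12$ in the log-density and the omission of $n_f\log(2\pi)$ mean the bound should be read as holding for the negative log marginal likelihood up to an additive constant, which is harmless since that constant does not depend on the quantities $b, W, \Lambda$ being optimized.
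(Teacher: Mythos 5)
Your proposal is correct and follows essentially the same route as the paper: the paper likewise reduces the claim to the identity $\min_b \bigl(\tfrac{1}{\eta^2}\norm{Y_f - F_W b}^2 + b^\top \Lambda^{-1} b\bigr) = Y_f^\top \Sigma^{-1} Y_f$ with $\Sigma = F_W \Lambda F_W^\top + \eta^2 I$, and then adds $\log\det\Sigma$ to both sides to bound the negative log marginal likelihood. The only difference is that the paper imports this identity from Tipping (2001), whereas you prove it yourself via completing the square and Woodbury --- a correct and welcome addition.
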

This regularized loss (proved in \Cref{sec:automatic_complexity_determination_appendix}) provides an alternative loss function to the negative log posterior which does not suffer from the degeneracy above while allowing optimization over $b$, $W$, $\lambda$ and $\kappa$. For that, we shall use \Cref{eqn:marginal_likelihood_upper_bound} as a regularized reconstruction loss to learn our predictive model and automatically find the optimal {\em time scale} $\lambda$.

\textbf{Remark}: 
We use batch normalization \citep{BN} along the rows of $F_W$ so that features have comparable scales; this avoids the TCN network countering the fading regularization by increasing its output scales (see \Cref{sec:batch_normalization_appendix}).

The STRIC model represents a parametric functionc class trained to produce the prediction residual in response to an input time series. In the next section, we describe how to use such a residual to detect anomalies.

\section{Anomaly score on prediction residuals} \label{sec:admodel}

Our method to compute anomaly scores is based on a variational approximation of the likelihood ratio between two windows of prediction residuals.
We use the prediction residuals of our residual architecture to test the hypothesis that the time instant $t$ is anomalous by comparing its statistics before $t$ on temporal windows of length $\np$ and $\nf$. The detector is based on the likelihood ratios aggregated sequentially using the classical CUMSUM algorithm \citep{page1954continuous, cumsum_correlated}. CUMSUM, however, requires knowledge of the distributions, which we do not have. Unfortunately, the problem of estimating the densities is hard \citep{Vapnik1998} and generally intractable for high-dimensional time series \citep{subspace_change_point_detection}. We circumvent this problem by directly estimating the likelihood ratio with a variational characterization of $f$-divergences \citep{M_estimators_Jordan} which involves solving a convex risk minimization problem in closed form.
The overall method is entirely unsupervised, and users can tune the scale parameter (corresponding to the window of observation when computing the likelihood ratios) and the coefficient of CUMSUM depending on the application and desired operating point in the tradeoff between missed detection and false alarms.

\subsection{Likelihood Ratios and CUMSUM}\label{sec:background}
CUMSUM \citep{page1954continuous} is a classical Sequential Probability Ratio Test (SPRT) \citep{abrupt_changes_book, subspace_change_point_detection} of the null hypothesis $H_0$ that the data after the given time $\tchange$ comes from the same distribution  as before, against the alternative hypothesis $H_\tchange$ that the distribution is different. 
We denote the distribution before $\tchange$ as $\pp$ and the distribution after the anomaly at time $\tchange$ as $\pf$.
If the density functions $\pp$ and $\pf$ were known (we shall relax this assumption later), the optimal statistic to decide whether a datum $y(i)$ is more likely to come from one or the other is the likelihood ratio $s(y(i))$. 
According to the Neyman-Pearson lemma, $H_0$ is accepted if the likelihood ratio $s(y(i))$ is less than a threshold chosen by the operator, otherwise $H_\tchange$ is chosen. In our case, the competing hypotheses are $H_0 =$ ``no anomaly has happened'' and $H_\tchange =$ ``an anomaly happened at time $\tchange$''. We denote with $p_{H_0}$ and $p_{H_\tchange}$ the p.d.f.s under $H_0$ and $H_\tchange$ so that:
$p_{H_0}(Y_1^K) = \pp(Y_1^K)$ and $p_{H_\tchange}(Y_1^{c-1}) = \pp(Y_1^{\tchange-1})$, $p_{H_\tchange}(Y_{\tchange}^K\mid Y_1^{\tchange-1}) = \pf(Y_{\tchange}^K \mid Y_1^{\tchange-1})$. Therefore the likelihood ratio is:
\begin{equation}\label{eqn:likelihood_ratio}
    \Omega_c^t := \frac{p_{H_c}(Y_1^t)}{p_{H_0}(Y_1^t)} = \frac{\pp(Y_{1}^{c-1})\pf(Y_{c}^t \mid Y_1^{c-1})}{\pp(Y_1^t)} = \frac{\pf(Y_{c}^t\mid Y_1^{c-1})}{\pp(Y_{c}^t \mid Y_1^{c-1})} = \prod_{i=c}^t \frac{\pf(y(i)\mid Y_1^{i-1})}{\pp(y(i)\mid Y_1^{i-1})} 
\end{equation}
To determine the presence of an anomaly, we can compute the cumulative sum $S_\tchange^t := \log \Omega_\tchange^t$ of the log likelihood ratios,
which depends on the time $\tchange$, and estimate the true change point $\tchange^*$ using a maximum likelihood criterion, corresponding to the detection function $h_t = \max_{1\leq \tchange \leq t} S_\tchange^t$. The first instant at which we can confidently assess the presence of a change point (a.k.a. stopping time) is: $\tchange_{\tstop} = \min \{t : h_t \geq \CMSthr \}$ where $\CMSthr$ is a design parameter that modulates the sensitivity of the detector depending on the application. The final estimate $\hat{\tchange}$ of the true change point after the detection $\tchange_{\tstop}$ is given by the timestamp $\tchange$ at which $h_{\tchange_{\tstop}} = \max_{1\leq \tchange \leq \tchange_{\tstop}} S_\tchange^{\tchange_{\tstop}}$ is achieved. In \Cref{sec:cumsum2}, we provide an alternative derivation that shows that CUMSUM is a comparison of the test statistic with an adaptive threshold that keeps complete memory of past ratios.
The next step is to relax the assumption of known densities, which we bypass in the next section by directly approximating the likelihood ratios to compute the cumulative sum.

\begin{table*}
\centering
\small
\setlength{\tabcolsep}{5pt}
\caption[Comparison with SOTA anomaly detectors]{\textbf{Comparison with SOTA anomaly detectors:} We compare STRIC with other anomaly detection methods (see \Cref{sec:discussion-on-SOTA-AD}) on the experimental setup and the same evaluation metrics proposed in \cite{multivariate_benchmark, anomaly_transformer}. 
The baseline models are: VAR,  OCSVM \citep{OCSVM}, IForest \citep{isolation_forest}, DAGMM \citep{DAGMM}, LSTM \citep{ad_survey, deepant}, OmniAnomaly  \citep{SMDataset}, THOC \citep{Temporal-hierarchical-oneclass}, Anomaly Transformer \citep{anomaly_transformer}.
}
\label{tab:ad_results}
        \begin{tabularx}{\textwidth}{p{\sidedescwidth} p{\modelcolwidth} *{8}{C}}
         \topleftdesc{Models}{9}
         \textbf{F1 - multivariate datasets}   & {\bf Credit Card} & {{\bf CICIDS}} & {\bf GECCO} & {\bf SWAN-SF} & {\bf SMD} & {\bf PSM} \\
            \cmidrule(lr){2-8}
        & VAR            &  0.192 & 0.030 & 0.349 & 0.385 & 0.741 & 0.871 \\ 
        & OCSVM     & 0.183 &	0.007 & 0.296 &	0.485 & 0.562 & 0.707 \\ 
        & IForest   & 0.168  & 0.016 & 0.391 & 0.583 & 0.536 & 0.835 \\
        & DAGMM     & 0.062  & 0.013 & 0.074 & 0.271 & 0.709 & 0.808 \\
        & LSTM      & 0.007 & 0.046 & 0.305 & 0.312 & 0.828 & 0.810  \\
        & OmniAnomaly   & 0.084  & 0.049 & 0.330 & 0.405 & 0.886 & 0.808 \\
        & THOC      &  &	 &  &	 &  0.850 &  0.895 \\ 
        & Anomaly Transformer     &  &	 &  &	 &  0.923 &  \textbf{0.979} \\ 
        &  STRIC  (ours) & \textbf{0.203}  & \textbf{0.051} &  \textbf{0.418} & \textbf{0.669} & \textbf{0.926} & \textbf{0.979} \\
    \end{tabularx}
\end{table*}

\subsubsection{Likelihood ratio estimation with Pearson divergence}\label{sec:lik_ratio_estimation}

The goal of this section is to tackle the problem of estimating the likelihood ratio of two general distributions $\pp$ and $\pf$ given samples. To do so, we leverage a variational approximation of $f$-divergences \citep{M_estimators_Jordan} whose optimal solution is directly connected to the likelihood ratio. For different choices of divergence function, different estimators of the likelihood ratio can be built. We focus on a particular divergence choice, the Pearson divergence, since it provides a closed form estimate of the likelihood ratio (see \Cref{sec:variational_approximation_lik_ratio_appendix}).

\begin{proposition}\citep{M_estimators_Jordan, subspace_change_point_detection}
Let $\phi:= \pf / \pp$ be the likelihood ratio of the unknown distributions $\pf$ and $\pp$. 
Let $\mathcal{F}:= \{f_i : f_i \sim \pf, i=1,...,\nf \}$ and $\mathcal{H}:= \{h_i : h_i \sim \pp, i=1,..., \np\}$ be two sets containing $\nf$ and $\np$ samples i.i.d.\ from $\pf$ and $\pp$ respectively.
An empirical estimator $\hat{\phi}$ of the likelihood ratio $\phi$ is given by the solution to the following convex optimization problem:
\begin{equation}\label{eqn:likelihood_ratio_empirical_estimator}
    \hat{\phi} = \argmin_\phi \frac{1}{2\np} \sum_{i=1}^{\np} \phi(h_i)^2 - \frac{1}{\nf} \sum_{i=1}^{\nf} \phi(f_i)
\end{equation}
\end{proposition}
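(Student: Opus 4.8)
The plan is to recognize this as the standard least-squares (Pearson/$\chi^2$) density-ratio estimator and to argue in two stages: first identify a population risk whose unique minimizer is the true ratio $\phi^\ast := \pf/\pp$, then show the displayed empirical objective is its sample analogue. Concretely, I would start from the weighted $L^2$ discrepancy $\frac{1}{2}\,\E_{\pp}\big[(\phi-\phi^\ast)^2\big]$ between a candidate $\phi$ and the true ratio, measured under the reference distribution $\pp$. Expanding the square gives $\frac{1}{2}\E_{\pp}[\phi^2] - \E_{\pp}[\phi\,\phi^\ast] + \frac{1}{2}\E_{\pp}[(\phi^\ast)^2]$, where the last term does not depend on $\phi$ and can be discarded.

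The key step is the identity $\phi^\ast\,\pp = \pf$, which turns the cross term into $\E_{\pp}[\phi\,\phi^\ast] = \int \phi\,\phi^\ast\,\pp = \int \phi\,\pf = \E_{\pf}[\phi]$. This is exactly what removes the need to know or estimate the densities: the term that couples $\phi$ to the unknown $\phi^\ast$ becomes an expectation under $\pf$, which we can evaluate from samples. Hence, up to an additive constant, minimizing the discrepancy is equivalent to minimizing the population risk $L(\phi) = \frac{1}{2}\E_{\pp}[\phi^2] - \E_{\pf}[\phi]$. Its pointwise first-order condition $\phi(x)\pp(x) - \pf(x)=0$ confirms the minimizer is $\phi^\ast = \pf/\pp$, while the nonnegative second variation $\E_{\pp}[\delta^2]$ (strictly positive on the support of $\pp$) yields convexity and hence uniqueness on that support. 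Replacing the two expectations by empirical averages over $\mathcal{H}=\{h_i\}\sim\pp$ and $\mathcal{F}=\{f_i\}\sim\pf$ produces exactly the displayed objective, giving $\hat\phi$.

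For completeness I would also tie this to the $f$-divergence variational characterization of \citet{M_estimators_Jordan}, since that is the cited framework: taking the generator $f(u)=\frac{1}{2}(u-1)^2$, whose $D_f$ is the Pearson divergence, and computing its Fenchel conjugate $f^\ast(v)=\frac{1}{2}v^2+v$, the variational form $D_f = \sup_g \E_{\pf}[g]-\E_{\pp}[f^\ast(g)]$ with the substitution $g=\phi-1$ reduces (after dropping constants) to maximizing $\E_{\pf}[\phi]-\frac{1}{2}\E_{\pp}[\phi^2]$, attained at $g^\ast=f'(\phi^\ast)=\phi^\ast-1$. This confirms that the least-squares objective and the Pearson variational bound share the same optimizer.

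The main obstacle is not the population identity, which is a short calculation, but making the empirical program well-posed: minimizing over all functions $\phi$ is degenerate, since the sample objective can be driven arbitrarily low by inflating $\phi$ on the $\mathcal{F}$-samples. I would therefore read the statement as restricting $\phi$ to a tractable hypothesis class (a finite linear basis or an RKHS), under which the objective is a convex quadratic admitting the advertised closed-form minimizer, and I would make that restriction explicit. The remaining point is consistency, namely that the empirical averages converge to their population counterparts so that $\hat\phi \to \phi^\ast$ within the class as $\nf,\np\to\infty$; I would relegate both the model-class specialization and the resulting closed-form solution under the Pearson choice to the referenced appendix.
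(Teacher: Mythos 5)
Your proposal is correct, and your primary derivation takes a genuinely different (and more elementary) route than the paper. The paper obtains the objective by instantiating the variational lower bound $D_f(\p,\q)\geq\sup_{\phi\in\Phi}\int[\phi\,d\q-f^*(\phi)\,d\p]$ at the Pearson generator $f(t)=\tfrac12(t-1)^2$, computing the Fenchel conjugate $f^*(v)=\tfrac12 v^2+v$, and then performing the change of variables $z=\phi+1$ so that the variational variable estimates the ratio directly rather than $\partial f(\pf/\pp)$. You instead expand the weighted $L^2$ discrepancy $\tfrac12\E_{\pp}[(\phi-\phi^*)^2]$ and use the change-of-measure identity $\E_{\pp}[\phi\,\phi^*]=\E_{\pf}[\phi]$ to eliminate the unknown ratio from the cross term; this reaches the same population risk $\tfrac12\E_{\pp}[\phi^2]-\E_{\pf}[\phi]$ in two lines without any convex duality, at the cost of being specific to the quadratic loss (the paper's route generalizes to other $f$-divergences, which is why the authors frame it that way). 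Since you also reproduce the paper's conjugate-duality computation as a consistency check, your write-up subsumes the paper's argument. Your observation that the unconstrained empirical program is degenerate (the objective can be driven to $-\infty$ by inflating $\phi$ on the $\mathcal{F}$-samples) is a real gap in the proposition as literally stated; the paper only resolves it implicitly in the follow-up proposition by restricting to an RKHS and adding the $\tfrac{\gamma}{2}\norm{\alpha}^2_\Phi$ regularizer, so making the hypothesis-class restriction explicit, as you do, is an improvement rather than a deviation.
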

\begin{proposition}\citep{subspace_change_point_detection, LS_IS_sugiyama}
    Let $\phi$ in \Cref{eqn:likelihood_ratio_empirical_estimator} be chosen in the family of Reproducing Kernel Hilbert Space (RKHS) functions $\Phi$ induced by the kernel $k$. Let the kernel sections be centered on the set of data $\mathcal{S}_{tr} := \{\mathcal{F}, \mathcal{H}\}$ and let the kernel matrices evaluated on the data from $\pf$ and $\pp$ be $\Kf:= K(\mathcal{F}, \mathcal{S}_{tr}) $ and $\Kn:= K(\mathcal{H}, \mathcal{S}_{tr})$. 
    The optimal regularized empirical likelihood ratio estimator on a new datum $e$ is given by: 
    \begin{equation}\label{eqn:RKHS_estimator}
     \hat{\phi}(e) = \frac{\np}{\nf} K(e, \mathcal{S}_{tr}) \Big(\Kn^T \Kn + \gamma \np I_{\np+\nf}\Big)^{-1} \Kf^T \ones
    \end{equation}
\end{proposition}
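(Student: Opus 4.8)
The plan is to reduce the infinite-dimensional variational problem of the preceding proposition to a finite-dimensional convex quadratic program via the representer theorem, and then to solve that program in closed form. First I would add to the empirical objective of \Cref{eqn:likelihood_ratio_empirical_estimator} a Tikhonov regularizer, so that the functional minimized over $\phi \in \Phi$ is
\begin{equation*}
J(\phi) = \frac{1}{2\np}\sum_{i=1}^{\np}\phi(h_i)^2 - \frac{1}{\nf}\sum_{i=1}^{\nf}\phi(f_i) + \frac{\gamma}{2}\,R(\phi),
\end{equation*}
where $R$ is the penalty whose precise form I return to below. Because $J$ depends on $\phi$ only through its evaluations $\{\phi(h_i)\}$, $\{\phi(f_i)\}$ and a norm, the representer theorem guarantees that the minimizer lies in the span of the kernel sections centered on $\mathcal{S}_{tr} = \{\mathcal{F}, \mathcal{H}\}$. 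I would therefore write $\phi(\cdot) = K(\cdot, \mathcal{S}_{tr})\alpha$ for a coefficient vector $\alpha \in \R^{\np + \nf}$, reducing the search from $\Phi$ to $\alpha$.

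Next I would substitute this expansion into $J$ using the reproducing property. Stacking evaluations gives $(\phi(h_i))_i = \Kn\alpha$ and $(\phi(f_i))_i = \Kf\alpha$, so that $\sum_i \phi(h_i)^2 = \alpha^\top \Kn^\top \Kn \alpha$ and $\sum_i \phi(f_i) = \ones^\top \Kf \alpha$. Taking $R(\phi) = \norm{\alpha}^2$ (the coefficient-norm penalty used in the cited least-squares importance-fitting construction, which is the choice that yields the stated formula) turns $J$ into the quadratic
\begin{equation*}
J(\alpha) = \frac{1}{2\np}\alpha^\top \Kn^\top \Kn \alpha - \frac{1}{\nf}\ones^\top \Kf \alpha + \frac{\gamma}{2}\alpha^\top \alpha.
\end{equation*}

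Then I would exploit convexity. The Hessian $\frac{1}{\np}\Kn^\top \Kn + \gamma I$ is positive definite for $\gamma > 0$ since $\Kn^\top \Kn \succeq 0$, so $J$ is strictly convex and its unique minimizer is the stationary point. Setting $\nabla_\alpha J = \frac{1}{\np}\Kn^\top \Kn\alpha - \frac{1}{\nf}\Kf^\top \ones + \gamma\alpha = 0$ and clearing the factor $\np$ gives $(\Kn^\top \Kn + \gamma\np I_{\np+\nf})\alpha = \frac{\np}{\nf}\Kf^\top\ones$, whose solution is $\alpha = \frac{\np}{\nf}(\Kn^\top \Kn + \gamma\np I_{\np+\nf})^{-1}\Kf^\top\ones$; the same positive-definiteness guarantees the inverse exists. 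Evaluating $\hat\phi(e) = K(e, \mathcal{S}_{tr})\alpha$ then reproduces \Cref{eqn:RKHS_estimator}.

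The only genuinely delicate point, and the one I would flag explicitly, is the form of the regularizer: penalizing the coefficient norm $\norm{\alpha}^2$ produces the $\gamma\np I$ term in the stated estimator, whereas penalizing the true RKHS norm $\norm{\phi}_\Phi^2 = \alpha^\top K \alpha$ (with $K = K(\mathcal{S}_{tr},\mathcal{S}_{tr})$) would instead produce $\gamma\np K$. Everything else is routine linear algebra, so pinning down that the intended penalty is on $\alpha$ is what makes the closed form come out exactly as written.
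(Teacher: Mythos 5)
Your proposal is correct and follows essentially the same route as the paper's own derivation in the appendix: representer theorem, reduction to a quadratic in the coefficient vector $\alpha$, Tikhonov regularization, and solution of the normal equations (the paper additionally first poses the problem with a positivity constraint $\alpha \geq 0$ and then relaxes it to obtain the closed form). Your flag about the regularizer is well taken and matches what the paper actually does: although the paper writes the penalty as $\frac{\gamma}{2}\norm{\alpha}^2_{\Phi}$, the stated closed form is only consistent with the Euclidean coefficient penalty $\frac{\gamma}{2}\alpha^\top\alpha$, exactly as you observe.
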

\textbf{Remark}: The estimator in \Cref{eqn:RKHS_estimator} is not constrained to be positive. Nonetheless, the positivity constraints can be enforced. In this case, the closed form solution is no longer valid but the problem remains convex. 

\subsection{Subspace likelihood ratio estimation and CUMSUM}\label{sec:subsapce_likelihood_ratio_estimation}
We now present our novel anomaly detector estimator. We test for an anomaly in the data $Y_1^t$ by looking at the prediction residuals $E_1^t$ obtained from our time series predictor of the normal behaviour, which is a sufficient representation of $Y_1^t$ (see \Cref{sec:subsapce_likelihood_ratio_estimation_appendix}). This guarantees that the sequence $E_1^t$ is white in each of its normal subsequences. On the other hand, if the model is applied to a data subsequence which contains an abnormal condition, the residuals are correlated.

We estimate the likelihood ratio of $\pf$ and $\pp$ on a datum $e_t$ as $\hat{\phi}_t(e_t)$. $\hat{\phi}_t$ is obtained by applying \Cref{eqn:RKHS_estimator} on the past window of size $\np + \nf$.
At each time instant $t$, we compute the necessary kernel matrices as $\Kf(E_{t-\nf+1}^{t}, E_{t-\np-\nf+1}^{t})$ and $\Kn(E_{t-\np-\nf+1}^{t-\nf}, E_{t-\np-\nf+1}^{t})$.

\begin{algorithm}
  \caption{Non-parametric CUMSUM}\label{alg:non-parametric-CUMSUM}
  \begin{algorithmic}[1]
        \REQUIRE Sequence of prediction residuals $e_i$ with $i \in \{0, ..., T\}$, kernel function $k$, pre-anomaly and post-anomaly reference window lengths $\np$ and $\nf$, CUMSUM threshold $\CMSthr$.
        \STATE $\hat{S}^{\np + \nf -1 }_0 \gets 0$                                                          \hfill \COMMENT{Initialize cumulative sum}
        \FOR{$t \in \{\np + \nf, ..., T\}$}
            \STATE $\Ep  \gets E_{t-\np - \nf + 1}^{t - \nf}$, $\Ef  \gets E_{t - \nf + 1}^{t}$ 
            \STATE $K_p \gets K(\Ep, E_{t-\np - \nf + 1}^{t})$
            \STATE $K_f \gets K(\Ef, E_{t - \np - \nf + 1}^{t})$ 
            \STATE Compute likelihood ratio estimate $\hat{\phi}_t(e_t)$ with \Cref{eqn:RKHS_estimator} \hspace{1.7cm}\COMMENT{Estimate lik. ratio}
            \STATE $\hat{S}_0^t \gets \hat{S}_0^{t-1} + \log \hat{\phi}_t(e_t)$                         \hfill \COMMENT{Update cumulative sum}
            \STATE $m_t = \min_{1\leq j \leq t} \hat{S}_0^j$                                            \hfill \COMMENT{Update minimum}
            \STATE $t_\tstop \gets \min\{k: \hat{S}_0^k \geq m_k + \CMSthr \}$
            \IF{$t_\tstop  \neq \emptyset $}
            \STATE \textbf{return} $\arg \min_{1 \leq j \leq t_\tstop} \hat{S}_0^j$ 
            \ENDIF
        \ENDFOR
  \end{algorithmic}
\end{algorithm}

\textbf{Remark}: At time $t$, the likelihood ratio is estimated assuming i.i.d.\ data. This assumption holds if no anomaly happened but does not hold in the abnormal situation since residuals are not i.i.d. In \Cref{sec:subsapce_likelihood_ratio_estimation_appendix}, we prove that treating correlated variables as uncorrelated provides a lower bound on the actual cumulative sum of likelihood ratios.
For a fixed threshold, this means the detector cumulates less and therefore requires more time to reach the threshold.

Finally, we compute the detector function by aggregating the estimated likelihood ratios: $\hat{S}_\tchange^t:= \sum_{i=\tchange}^t \log \hat{\phi}_i(e_i)$. 

\textbf{Remark}: The choice of the windows length ($\np$ and $\nf$) is fundamental and highly influences the likelihood estimator. Using small windows makes the detector highly sensible to point outliers, while larger windows are better suited to estimate sequential outliers (see \Cref{sec:subsapce_likelihood_ratio_estimation_appendix}).

\section{Experimental Results}\label{sec:experiments_TRIAD}

In this section, we show that STRIC can be successfully applied to detect anomalous behaviours on different anomaly detection benchmarks.
In particular, we test our novel residual temporal structure, the automatic complexity regularization and the anomaly detector on the following multivariate datasets: Credit Card, CICIDS, GECCO, SWAN-SF (part of TODS, a multivariate anomaly detection benchmark \citep{multivariate_benchmark}), SMD \citep{SMDataset} and PSM \citep{PSM_dataset}. Descriptions and statistical details are summarazed in \Cref{sec:datasets_appendix}, see \Cref{sec:experiments_appendix} for the experimental setup and data normalization.

\begin{table*}
\centering
\small
\setlength{\tabcolsep}{5pt}
\vspace{-.3cm}
\caption{\textbf{Ablation study on the RMSE of prediciton errors}: We compare Test error and Generalization Gap (\gengap) of a standard TCN model with our STRIC predictor and some variation of it (using the same training hyper-parameters). 
}
\label{tab:ablation_study}
    \begin{tabularx}{\textwidth}{p{\sidedescwidth} p{\modelcolwidth} *{8}{C}}
         \topleftdesc{Datasets}{13}
            & \multicolumn{2}{c}{\bf TCN} & \multicolumn{2}{c}{{\bf TCN + Linear}} & \multicolumn{2}{c}{\bf TCN + Fading} &
            \multicolumn{2}{c}{\bf STRIC pred} \\
          \cmidrule(lr){3-4} \cmidrule(lr){5-6} \cmidrule(lr){7-8} \cmidrule(lr){9-10}
         &  & Test & \gengap & Test & \gengap & Test & \gengap & Test & \gengap \\
            \cmidrule(lr){2-10}
        & Credit Card  & 0.83 & 0.78 & 0.52 & 0.25        & 0.54 &   0.23  & \best{0.44} & \best{0.15} \\ 
        & CICIDS       & 0.94 & 0.68 & 0.86 & 0.57        & 0.78 &   0.31  & \best{0.50} & \best{0.09} \\
        & GECCO        & 1.14 & 1.06 & 1.04 & 0.99        & 0.95 &   0.68  & \best{0.84} & \best{0.20} \\
        & SWAN-SF      & 0.92 & 0.88 & 0.81 & 0.75        & 0.77 &   0.51  & \best{0.62} & \best{0.15} \\
        & SMD         & 0.31 & 0.17 & 0.29 & 0.16         & 0.31 &   0.14  & \best{0.27} & \best{0.10} \\
        & PSM         & 0.28 & 0.09 & 0.24 & 0.07         & 0.21 &   0.04  & \best{0.19} & \best{0.03} \\
    \end{tabularx}
\end{table*}

\textbf{Anomaly detection}: While recent works show deep learning models are not well suited to solve AD on standard anomaly detection benchmarks \citep{ad_survey, multivariate_benchmark}, we prove deep models can be effective provided they are equipped with a proper inductive bias and regularization. In \Cref{tab:ad_results}, we show that STRIC improves over the existing state-of-the-art anomaly detection methods, both classical (\eg VAR/OCSVM/IForest) and deep learning based. Our experiments follow the experimental setup and evaluation criteria used in \citet{multivariate_benchmark} and \citet{anomaly_transformer}.  

To begin with, note that classical anomaly detection algorithms outperform deep learning methods on half of the datasets we use \cite{multivariate_benchmark}: Credit Card, GECCO, SWAN-SF. We further verify this observation on other typical anomaly detection benchmarks in the appendix: \Cref{tab:ad_results_appendix}. The fact that STRIC is on par or better than classical anomaly detection algorithms on these datasets highlights the importance of the modelling biases we introduce on the predictor: linear dynamical layers and fading regularization. We further validate this in \Cref{sec:TCN_vs_TRIAD_appendix} where we show that STRIC's inductive biases easily allow modelling time series characterized by trend and seasonal components, while deep learning based methods fail. 

STRIC remains competitive on datasets in which deep models prevail. On CICIDS it outperforms OmniAnomaly \cite{SMDataset}, a reconstruction-based method built on a stochastic recurrent neural network architecture and planar normalizing flows for computing reconstruction probabilities. Moreover, STRIC performs comparably to Anomaly Transformer on SMD and PSM \citep{anomaly_transformer}. The authors of \citep{anomaly_transformer} propose to use association discrepancy and transformer architectures to improve pointwise reconstruction-based methods. Our results suggest that using our non-parametric CUMSUM anomaly classifier which aggregates information over time, opposed to pointwise thresholding of the prediction residuals, leads to equally good results while allowing us to use an interpretable predictive model.

\begin{figure*}[ht]
\centering
\begin{minipage}[b]{0.35\linewidth}
    \includegraphics[width=0.90\linewidth]{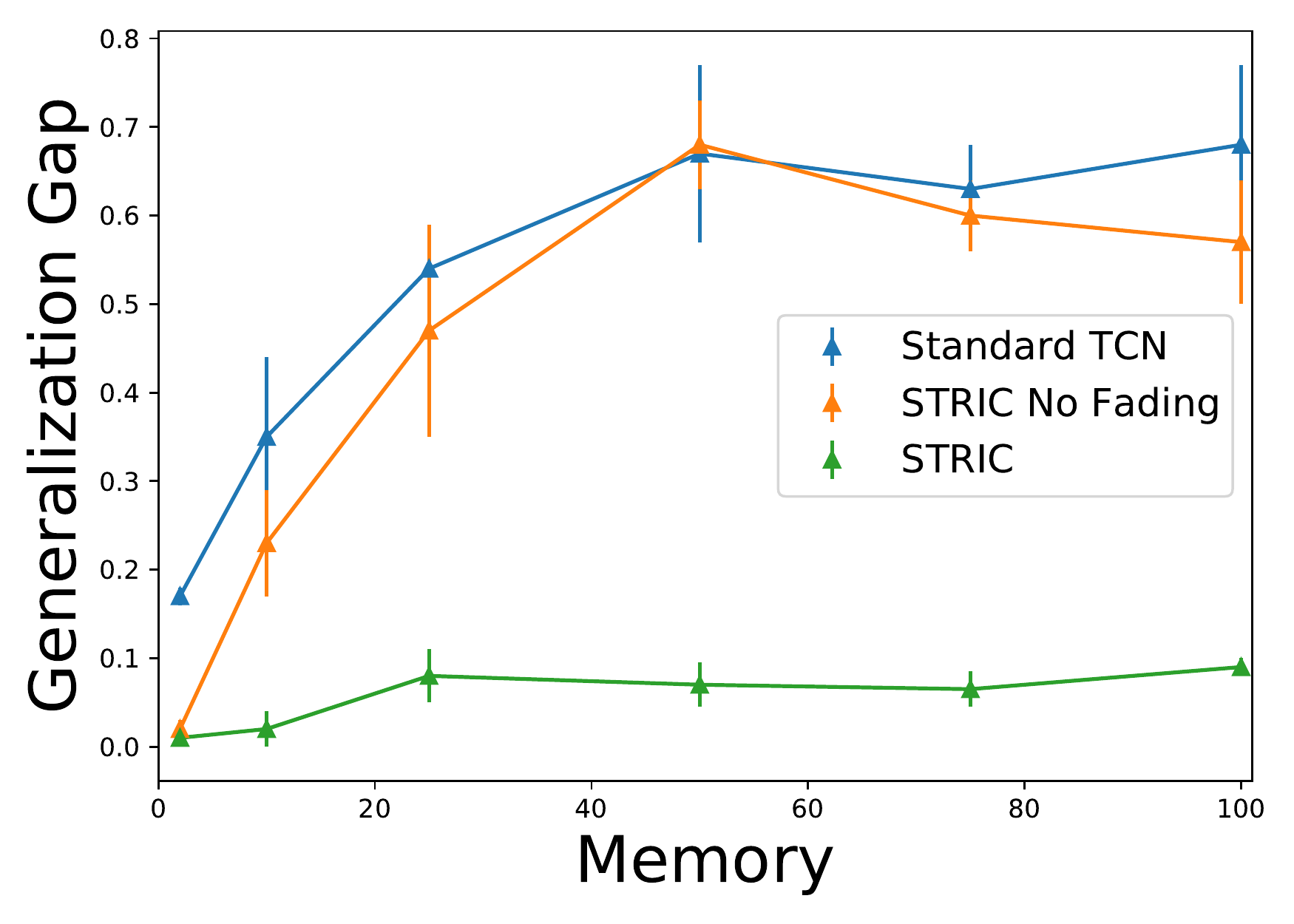}
    \caption{figure}{\textbf{Automatic complexity selection}: Fading memory regularization preserves Generalization Gap as the memory  of the predictor $n_p$ increases on CICIDS.}
    \label{fig:fading_regularization_memory}
\end{minipage}
\quad
\begin{minipage}[b]{0.59\linewidth}
    \centering
    \includegraphics[width=0.90\linewidth]{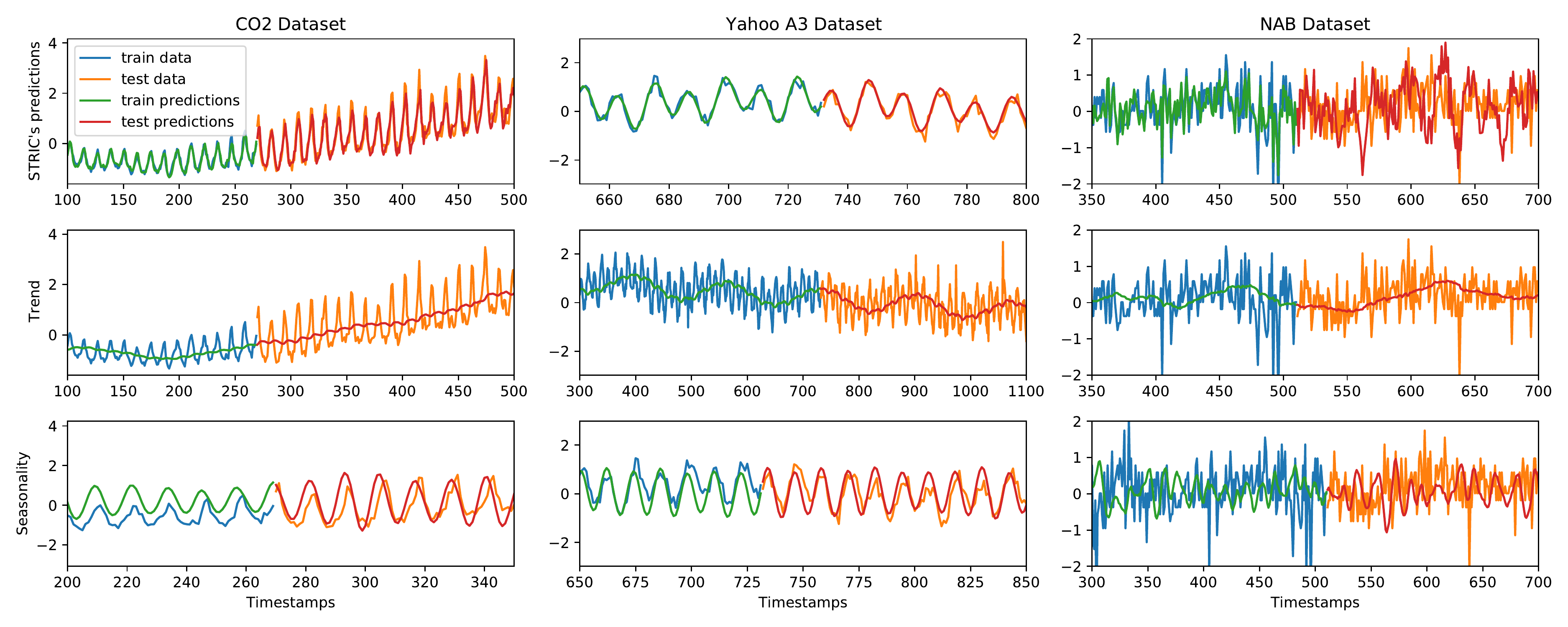}
    \caption{We test STRIC time series LDLs on different datasets (columns). In each panel, we show both training data and test data (see colors). \textbf{First row}: STRIC time series predictor (output of non-linear module). \textbf{Second row}: Trend components extracted by the LDLs.
    \textbf{Third row}: Seasonal components extracted by the LDLs.
    }
    \label{fig:interpretable_plots}
\end{minipage}
\end{figure*}

\textbf{STRIC interpretable time series decomposition}: 
In \Cref{fig:interpretable_plots}, we show STRIC's interpretable decomposition on some toy and real world time series (see \Cref{sec:datasets_appendix} for details on these datasets). We report predicted signals (first row), estimated trends (second row) and seasonalities (third row) for different datasets. For all experiments, we plot both training data (first $40 \%$ of each time series) and test data. Note the interpretable components of STRIC generalize outside the training data, thus making STRIC work well on non-stationary time series (\eg where the trend component is non-negligible and typical non-linear models overfit, see \Cref{sec:TCN_vs_TRIAD_appendix}). 

\textbf{Automatic complexity selection}: In \Cref{fig:fading_regularization_memory}, we test the effects of our automatic complexity selection (fading memory regularization) on STRIC. We compare STRIC with a standard TCN model and STRIC without regularization, as the memory of the predictor increases. 
The test error of STRIC is uniformly smaller than a standard TCN (without LDLs nor fading regularization). Adding LDLs to a standard TCN improves generalization for a fixed memory \wrt standard TCN but gets worse (overfitting occurs) as soon as the available past data horizon increases. On the other hand, the generalization gap of STRIC does not deteriorate as the memory of the predictor increases (see \Cref{sec:ablation_study_appendix} for a comparison with other metrics).

\textbf{Ablation study}: 
We now compare the prediction performance of a general TCN model with STRIC where we remove LDLs and fading regularization one at the time. In \Cref{tab:ablation_study}, we report the test RMSE prediction errors and the RMSE generalization gap (\ie difference between test and training RMSE prediction errors) for different datasets while keeping the same  training parameters (\eg training epochs, learning rates etc.) and model parameters (\eg $n_b=100$). 
The addition of the linear interpretable model before the TCN slightly improves the test error for all datasets.
We note this effect is more visible on Credit Card, GECCO and SWAN-SF: the datasets in which classical anomaly detection algorithms (including VAR) perform the best.

While the generalization of STRIC is always better than that of a standard TCN model or ablated STRIC, we note that applying fading memory regularization alone to a standard TCN does not always improve generalization (but never decreases it): this highlights that the benefits of combining the linear module and the fading regularization together are not a trivial `sum of the parts'. 
Consider for example SMD: STRIC achieves 0.27 test error, the best ablated model (TCN + Linear) 0.29, while TCN + Fading does not improve over the baseline TCN. A similar observation holds for other datasets too, see \Cref{tab:gen_gap_ablation_study_appendix}. 
These results suggest that fading regularization alone might not be beneficial (nor detrimental) for time series containing purely periodic components,  which correspond to infinite memory systems (systems with unitary fading coefficient), or when the length of predictor's input windows is not large enough. In the first case, the interpretable module is essential in removing the periodicities and providing the regularized non-linear module (TCN + Fading) with a residual that is  easier to model; however, in the latter case, there is no gain in constraining the non-linear layer to look only at a subset of the input window since all the available past is informative for prediction.
To conclude, our proposed fading regularization has (on average) a beneficial effect in controlling the complexity of a standard TCN model and reduces its generalization gap ($\approx 35\%$ reduction).
Moreover, coupling fading regularization with the interpretable module guarantees the best generalization.

In \Cref{sec:ablation_study_appendix}, we show further studies on the effects of STRIC's hyper-parameters on its performance. In particular, we find that STRIC is highly sensitive to the choice of the length of the windows ($\nf$ and $\np$) used to estimate the likelihood ratio, while being much more robust to the choice of the memory of the predictor. 
We leave for future work the optimal design of the horizon lengths $\nf$ and $\np$.

\section{Discussion and conclusions}\label{sec:conclusions}

Unlike purely DNN-based methods \citep{tadgan, deepant, bashar2020tanogan}, STRIC exposes to the user both an interpretable time series decomposition \citep{STL_cleveland90} and the relevant time scale of the time series. We find that both Linear Dynamical Layers and the fading memory regularization are important in building a successful model (see \Cref{tab:ablation_study}). 
In particular, LDLs helps STRIC generalize correctly on non-stationary time series on which standard deep models (such as TCNs) overfit \citep{ad_survey}. 
Moreover, we show that our novel fading regularization  can improve the generalization error of TCNs up to $\approx 35\%$ over standard TCNs, provided the periodic components of the time series are captured and the time horizon is not choosen too small (\Cref{sec:experiments_TRIAD}). 
We highlight that the overall computational complexity and memory requirement of our method remain the same as standard TCNs, so that STRIC can be easily deployed on large-scale time series datasets.

Making the non-parametric anomaly detector fully adaptive to the data is an interesting research direction: while our fading window regularizer automatically tunes the predictor's window length by exploiting the self-supervised nature of the prediction task, 
methods to automatically tune the detector's window lengths (an unsupervised problem) are worth further investigations. 
Moreover, designing statistically optimal rules to calibrate our detector's threshold $\CMSthr$ based on the desired operating point (a tradeoff between missed detection and false alarms) would further enhance the out-of-the-box robustness of our method.

\bibliography{iclr2022_conference}
\bibliographystyle{iclr2022_conference}

\newpage
\appendix
\section{Appendix}

\subsection{Implementation}\label{sec:implementation}
Given two scalar time series $x$ and $\varphi$, we denote their time convolution $g$ as: $g(t):=( \varphi * x)(t) := \sum_{i=-\infty}^{\infty} \varphi(i)x(t-i)$. We say that $g$ is the causal convolution of $x$ and $\varphi$ if $\varphi(t) =0 $ for $t < 0$, so that the output $g(t)$ does not depend on future values of $x$ (w.r.t.\ current time index $t$).
In the following, we shall give a particular interpretation to the signals $x$ and $\varphi$: $x$ will be the input signal to a filter parametrized by an impulse response $\varphi$ (kernel of the filter). 
Note any (causal) convolution is defined by an infinite summation for each given time $t$. Therefore it is customary, when implementing convolutional filters, to consider a truncated sum of finite length. In practice, this is obtained assuming the filter impulse response is non-zero only in a finite window. The truncation is indeed an approximation of the complete convolution, nonetheless it is possible to prove that the approximation errors incurred due to truncation are guaranteed to be bounded under simple assumptions. To summarize, in the following we shall write $g(t):=( \varphi * x)(t)$ and mean that the impulse response of the causal filter $\varphi$ is truncated on a window of a given length.

\subsubsection{Architecture}
Let $Y_{t-n_p+1}^t \in \mathbb{R}^{n \times n_p}$  be the input data to our architecture at time step $t$ (a window of $n_p$ past time instants). 
The main blocks of the architecture are defined to encode trend, seasonality, stationary linear and non-linear part. In the following we shall denote each quantity related to a specific layer using either the subscripts \{TREND, SEAS, LIN, TCN\} or $\{0,1,2,3\}$.

We shall denote the input of each block as $X_k \in \mathbb{R}^{n \times n_p}$ and the output as $\hat{X}_k \in \mathbb{R}^{n \times n_p}$ for $k=0,1,2,3$. The residual architecture we propose is defined by the following: $X_0 = Y_{t-n_p+1}^t$ and $X_k = X_{k-1} - \hat{X}_{k-1}$ for $k=1,2,3$.  
At each layer we extract $l_k$ temporal features from the input $X_k$. We denote the temporal features extracted from the input of the $k$-th block as: $G_k:= G_k(X_k) \in \mathbb{R}^{l_k \times n_p}$. The $i$-th column of the feature matrix $G_k$ is a feature vector (of size $l_k$) extracted from the input $X_k$ up to time $t- n_p - i$. To do so, we use causal convolutions of the input signal $X_k$ with a set of filter banks \citep{TCN_Koltun}.

\subsubsection{Interpretable Residual TCN on scalar time series}
\textbf{Modeling Interpretable blocks}: In this section, we shall describe the main design criteria of the linear module. For each interpretable layer (TREND, SEAS, LIN), we convolve the input signal with a filter bank designed to extract specific components of the input. 

For example, consider the trend layer, denoting its scalar input time series by $x$ and its output by $g_{\TREND}$. Then $g_{\TREND}$ is defined as a multidimensional time series (of dimension $l_{\TREND}:= l_0$) obtained by stacking $l_0$ time series given by the convolution of $x$ with $l_0$ causal linear filters: $\varphi_{\TREND}{}_i * x$ for $i=0, ..., l_0 - 1$.
In other words, $g_{\TREND}:= [\varphi_{\TREND}{}_1 * x, \varphi_{\TREND}{}_2 * x, ..., \varphi_{\TREND}{}_{l_{1}} * x]^T$.
We denote the set of linear filters $\varphi_{\TREND}{}_i$ for $i=0, ..., l_0 - 1$ as $\mathcal{K}_{\TREND}$ and parametrize each filter in $\mathcal{K}_{\TREND}$ with its truncated impulse response (i.e. kernel) of length $k_0 := k_{\TREND}$.

We interpret each time series in $g_{\TREND}$ as an approximation of the trend component of $x$ computed with the $i$-th filter. We design each $\varphi_{\TREND}{}_i$ so that each filter extracts the trend of the input signal on different time scales \citep{HP_filter} (i.e., each filter outputs a signal with a different smoothness degree). We estimate the trend of the input signal by recombining the extracted trend components in $g_{\TREND}$ with the linear map $a_{\TREND}$. Moreover, we predict the future trend of the input signal (on the next time-stamp) with the linear map $b_{\TREND}$.

We construct the blocks that extract seasonality and linear part in a similar way.

\textbf{Implementing Interpretable blocks}:
The input of each layer is given by a window of measurements of length $n_p$. We zero-pad the input signal so that the convolution of the input signal with the $i$-th filter is a signal of length $n_p$ (note this introduces a spurious transient whose length is the length of the filter kernel). We therefore have the following temporal feature matrices: $G_0 = G_{\TREND} \in \mathbb{R}^{l_0 \times n_p}$, $G_1 = G_{\SEAS} \in \mathbb{R}^{l_1 \times n_p}$ and $G_2 = G_{\LIN} \in \mathbb{R}^{l_2 \times n_p}$. 

The output of each layer $\hat{X}_k$ is an estimate of the trend, seasonal or stationary linear component of the input signal on the past interval of length $n_p$, so that we have $\hat{X}_k \in \mathbb{R}^{1 \times n_p}$ (same dimension as the input $X_k$). On the other hand, the linear predictor $\hat{y}_k$ computed at each layer is a scalar. 
Intuitively, $\hat{X}_k$ and $\hat{y}_k$ should be considered as the best linear approximation of the trend, seasonality or linear part given block's filter bank in the past and future.
Our architecture performs the following computations: $\hat{X}_k := a_k^T G_k$ and $\hat{y}_k := \hat{X}_k b_k$ for $k=0,1,2$ where $a_i \in \mathbb{R}^{l_k}$ and $b_k \in \mathbb{R}^{n_p}$. Note $a_k$ combines features (uniformly in time) so that we can interpret it as a feature selector while $b_k$ aggregates relevant features across different time indices to build the one-step ahead predictor. 
Depending on the time scale of the signals it is possible to choose $b_k$ depending on the time index (similarly to the fading memory regularization). We experimented both the choice to make $b_k$ canonical ``vectors'' and dense vectors. We found that choosing $b_k$ as canonical vectors, whose non-zero entry is associated to the closest to the present time instat provides good empirical results on most cases. 

\textbf{Non-linear module}
The non-linear module is based on a standard TCN network. Its input is defined as $X_3 = Y_{t-n_p+1}^t - \hat{X}_0 - \hat{X}_1 - \hat{X}_2$, which is to be considered as a signal whose linearly predictable component has been removed.  The TCN extracts a set of $l_3$ non-linear features $G_3(X_3) \in \mathbb{R}^{l_3 \times n_p}$ which we combine with linear maps as done for the previous layers. 
The $j$-th column of the non-linear features $G_3$ is computed using data up to time $t-n_p+j$ (due to the internal structure of a TCN network \citep{TCN_Koltun}). The linear predictor on top of $G_3$ is $\hat{y}_{\TCN} := a_3^T G_3 b_3$, where $a_3 \in \mathbb{R}^{l_3}$ and $b_3 \in \mathbb{R}^{n_p}$. 

Finally, the output of our time model is given by: 
\begin{equation}\nonumber
    \hat{y}(t+1) = \sum_{k=0}^{3} \hat{y}_k =  \sum_{k=0}^{3} \hat{X}_k b_k = \sum_{k=0}^{3} a_k^T G_k(X_k) b_k.
\end{equation}

\subsubsection{Interpretable Residual TCN on multi-dimensional time series}
We extend our architecture to multi-dimensional time series according to the following principles: preserve interpretability (first module) and exploit global information to make local predictions (second module). 

In this section, the input data to our model is $Y_{t-n_p+1}^t \in \mathbb{R}^{n \times n_p}$ (a window of length $n_p$ from an $n$-dimensional time series).

\textbf{Interpretable module}: 
Each time series undergoes the sequence of 3 interpretable blocks independently from other time series: the filter banks are applied to each time series independently. Therefore, each time series is processed by the same filter banks: $\mathcal{K}_{\TREND}$, $\mathcal{K}_{\SEAS}$ and $\mathcal{K}_{\LIN}$. 
For ease of notation we shall now focus only on the trend layer. Any other layer is obtained by substituting `$\TREND$' with the proper subscript (`$\SEAS$' or `$\LIN$').

We denote by $G_{\TREND}, i \in \mathbb{R}^{l_0 \times n_p}$ the set of time features extracted by the trend filter bank $\mathcal{K}_{\TREND}$ from the $i$-th time series. Each feature matrix is then combined as done in the scalar setting using linear maps, which we now index by the time series index $i$: $a_{\TREND}{}_i$ and $b_{\TREND}{}_i$. The rationale behind this choice is that each time series can exploit differently the extracted features.
For instance, slow time series might need a different filter than faster ones (chosen using $a_{\TREND}{}_i$) or might need to look at values further in the past (retrieved using $b_{\TREND}{}_i$). We stack the combination vectors $a_{\TREND}{}_i$ and $b_{\TREND}{}_i$ into the following matrices: $\mathcal{A}_{\TREND} = [a_{\TREND}{}_1, a_{\TREND}{}_2, ..., a_{\TREND}{}_n]^T \in \mathbb{R}^{n \times l_0}$ and $\mathcal{B}_{\TREND} = [b_{\TREND}{}_1, b_{\TREND}{}_2, ..., b_{\TREND}{}_n]^T \in \mathbb{R}^{n \times n_p}$.

\textbf{Non-linear module}: 
The second ({\em non-linear}) module aggregates global statistics from different time series  \citep{think_globally} using a TCN model. It takes as input the prediction residual of the linear module and outputs a matrix $G_{\TREND}(Y_{t-n_p+1}^t) \in \mathbb{R}^{l_3 \times n_p}$ where $l_3$ is the number of output features that are extracted by the TCN model (which is a design parameter).
The $j$-th column of the non-linear features $G_{\TREND}(Y_{t-n_p+1}^t)$ is computed using data up to time $t-p+j$, where $p$ is the ``receptive'' field of the TCN ($p < n_p$). This is due to the internal structure of a TCN network \citep{TCN_Koltun} which relies on causal convolutions and typically scales as $O(2^h)$ where $h$ is the number of TCN hidden layers (the deeper the TCN the longer its receptive field). As done for the time features extracted by the interpretable blocks, we build a linear predictor on top of $G_{\TREND}(Y_{t-n_p+1}^t)$ for each single time series independently: the predictor for the $i$-th time series is given by: $\hat{y}_{\TCN}(t+1){}_{ i} := a_i^T G_{\TREND}(Y_{t-n_p+1}^t) b_i$ 
where $a_i \in \mathbb{R}^{l_3}$ and $b_i \in \mathbb{R}^{n_p}$. We stack the combination vectors $a_{\TCN}{}_i$ and $b_{\TCN}{}_i$ into the following matrices: $\mathcal{A}_{\TCN} = [a_{\TCN}{}_1, a_{\TCN}{}_2, ..., a_{\TCN}{}_n]^T \in \mathbb{R}^{n \times l_3}$ and $\mathcal{B}_{\TCN} = [b_{\TCN}{}_1, b_{\TCN}{}_2, ..., b_{\TCN}{}_n]^T \in \mathbb{R}^{n \times n_p}$.

Finally, the outputs of the predictor on the $i$-th time series are given by: 
\begin{align}\nonumber
    \hat{y}(t+1)_i = \sum_{k \in \{\TREND, \SEAS, \LIN\}} & a_{k}{}_i ^T G_{k}{}_i b_{k}{}_i \nonumber \\ & + a_{\TCN}{}_i ^T G_{\TCN} b_{\TCN}{}_i
\end{align}

\subsubsection{Block structure and initialization}\label{sec:structure_and_init}

In this section, we shall describe the internal structure and the initialization of each block. 

\textbf{Structure}:
Each filter is implemented by means of depth-wise causal 1-D convolutions \citep{TCN_Koltun}. We call the tensor containing the $k$-th block's kernel parameters $\mathcal{K}_k \in \mathbb{R}^{l_k \times N_k}$, where $l_k$ and $N_k$ are the block's number of filters and block's kernel size, respectively (without loss of generality, we assume all filters have the same dimension).
Each filter (causal 1D-convolution) is parametrized by the values of its impulse response parameters (kernel parameters). When we learn a filter bank, we mean that we optimize over the kernel values for each filter jointly. 
For multidimensional time series, we apply the filter banks to each time series independently (depth-wise convolution) and improve filter learning by sharing kernel parameters across different time series.

\textbf{Initialization}:
The {\em first block
} (trend) is initialized using $l_0$ causal Hodrick Prescott (HP) filters \citep{HP_filter} of kernel size $N_0$. HP filters are widely used to extract trend components of signals \citep{HP_filter}. 
In general a HP filter is used to obtain from a time series a smoothed curve which is not sensitive to short-term fluctuations and more sensitive to long-term ones \citep{HP_filter}.
In general, a HP filter is parametrized by a hyper-parameter $\lambda_{\HP}$ which defines the regularity of the filtered signal (the higher $\lambda_{\HP}$, the smoother the output signal). We initialize each filter with $\lambda_{\HP}$ chosen uniformly in log-scale between $10^3$ and $10^9$. Note the impulse response of these filters decays to zero (i.e., the latest samples from the input time series are the most influential ones). 
When we learn the optimal set of trend filter banks, we do not consider them parametrized by $\lambda_{\HP}$ and search for the optimal $\lambda_{\HP}$. Instead, we optimize over the impulse response parameters of the kernel which we do not assume live in any manifold (e.g., the manifold of HP filters). Since this might lead to optimal filters which are not in the class of HP filters, we impose a regularization which penalizes the distance of the optimal impulse response parameters from their initialization. 

The {\em second block} (seasonal part) is initialized using $l_1$ periodic kernels which are obtained as linear filters whose poles (i.e., frequencies) are randomly chosen on the unit circle (this guarantees to span a range of different frequencies). Note the impulse responses of these filters do not go to zero (their memory does not fade away). 
Similarly to the HP filter bank, we do no optimize the filters over frequencies, but rather we optimize them over their impulse response (kernel parameters). This optimization does not preserve the strict periodicity of filters. Therefore, in order to keep the optimal impulse response close to initialization values (purely periodic), we exploit weight regularization by penalizing the distance of the optimal set of kernel values from initialization values.

The {\em third block} (stationary linear part) is initialized using $l_2$ randomly chosen linear filters whose poles lie inside the unit circle, as done in \citep{random_filter_bank}. 
As the number of filters $l_2$ increases, this random filter bank is guaranteed to be a universal approximator of any (stationary) linear system (see \citep{random_filter_bank} for details). 

\textbf{Remark}: This block could approximate any trend and periodic component. However, we assume to have factored out both trend and periodicities in the previous blocks. 

The last module ({\em non-linear part}) is composed by a randomly initialized TCN model. 
We employ a TCN model due to its flexibility and capability to model both long-term and short-term non-linear dependencies. As is standard practice, we exploit dilated convolutions to increase the receptive field and make the predictions of the TCN (on the future horizon) depend on the most relevant past \citep{TCN_Koltun}. 

\textbf{Remark}: Our architecture provides an interpretable justification of the initialization scheme proposed for TCN in \citep{think_globally}. In particular our convolutional architecture allows us to handle high-dimensional time series data without a-priori standardization (e.g., trend or seasonality removal).

\subsection{Automatic Complexity Determination (fading memory regularization)}
\label{sec:automatic_complexity_determination_appendix}
In this section, we shall introduce a regularization scheme called {\em fading regularization}, to constrain TCN representational capabilities.

The output of the TCN model is $G(Y_{t-n_p+1}^t) \in \mathbb{R}^{l_3 \times n_p}$ where $l_3$ is the number of output features extracted by the TCN model.
The predictor build from TCN features is given by: $a_{\TCN}{}_{i}^T G_{\TCN}(Y_{t-n_p+1}^t) b_{\TCN}{}_i$, where the predictor $b_{\TCN}{}_i \in \mathbb{R}^{n_p}$ takes as input a linear combination of the TCN features (weighted by $a_{\TCN}{}_{i}$).
The $j$-th column of the non-linear features $G(Y_{t-n_p+1}^t)$ is computed using data up to time $t-n_p+j$ (due to causal convolutions used in the internal structure of the TCN network \citep{TCN_Koltun}).
One expects that the influence on the TCN predictor as $j$ increases should increase too (in case $j=n_p$, the statistic is the one computed on the closest window of time w.r.t.\ present time stamp). Clearly, the exact \textit{relevance} on the output is not known a priori and needs to be estimated. In other words, the predictor should be less sensitive to statistics (features) computed on a far past, a property which is commonly known as {\em fading memory}. Currently, this property is not built in the predictor $b_{\TCN}{}_i$, which treats each time instant equally and might overfit while trying to explain the future by looking into far and possibly non-relevant past. In order to constrain model complexity and reduce overfitting, we impose the fading memory property on our predictor by employing a specific regularization which we now describe.

\subsubsection{Fading memory in scalar time series}\label{sec:apx_fading_regularization}
We now follow the same notation and assumptions used in \Cref{sec:automatic_complexity_determination} which we now repeat for completeness.

We consider a scalar time series so that the TCN-based future predictor given the past $n_p$ measures can be written as: $\hat{y}_{\TCN}(t+1)=a^T G_{\TCN}(T_{t-n_p+1}^t) b = \hat{X}_k b$.
We shall assume that innovations (optimal prediction errors) are Gaussian, so that  $y(t+1) \mid Y_{t-n_p+1}^t \sim \mathcal{N}(F^*(Y_{t-n_p+1}^t)), \eta^2)$, where $F^*$ is the optimal predictor of the future values given the past. Note that this assumption does not restrict our framework and is used only to justify the use of the squared loss to learn the regression function of the predictor. In practice, we do not know the optimal $F^*$ and we approximate it with our parametric model. 
For ease of exposition, we group all the architecture parameters except $b$ in the weight vector $W$ (linear filters parameters $\mathcal{K}_{\TREND}, \mathcal{K}_{\SEAS}, \mathcal{K}_{\LIN}$, linear module recombination weights $\mathcal{A}_{\TREND}, \mathcal{A}_{\SEAS}, \mathcal{A}_{\LIN},\mathcal{B}_{\TREND}, \mathcal{B}_{\SEAS}, \mathcal{B}_{\LIN},$ and TCN kernel parameters and recombination coefficients $\mathcal{A}_{\TCN}$ etc.).
We write the conditional likelihood of the future given the past data of our parametric model as:
\begin{equation}\label{eq:conditional_likelihood}
    p(Y_{t+1}^{t+\nf} \mid b, W, Y_{t-n_p+1}^t) = \prod_{k=1}^{\nf}p(y(t+k) \mid b, W, Y_{t+k-n_p}^{t+k-1})
\end{equation}
To make the notation simpler, we shall denote by $Y_f:=Y_{t+1}^{t+\nf} \in \mathbb{R}^{\nf}$ the set of future outputs over which the predictor is computed and we shall use $\hat{Y}_{b, W} \in \mathbb{R}^{\nf}$ as the predictor's outputs.
Moreover, we shall drop the dependency on the conditioning past $Y_{t-n_p+1}^{t}$ (which is present in any conditional distribution).  \Cref{eq:conditional_likelihood} becomes: $p(Y_f \mid b, W) = \prod_{k=1}^{\nf}p(y(t+k) \mid b, W)$. 
The optimal set of parameters $b^*$ and $W^*$ in a Bayesian framework is computed by maximizing the posterior on the parameters given the data: 
\begin{equation}\label{eqn:posterior_appendix}
    p(b, W \mid Y_f) \propto p(Y_f \mid b, W) p(b) p(W)
\end{equation}
where $p(b)$ is the prior on the predictor and $p(W)$ is the prior on the remaining parameters. We encode in $p(b)$ our prior belief that the complexity of the predictor should not be too high and therefore it should only depend on the {\em most relevant past}. 

\textbf{Remark}: The prior does not induce hard constraints. It rather biases the optimal predictor coefficients towards the prior belief. This is clear by looking at the negative log-posterior which can be directly interpreted as the loss function to be minimized: $-\log p(b, W \mid Y_f)= -\log p(Y_f \mid b, W) - \log p(b) - \log p(W)$. In particular, the first term $\log p(Y_f \mid b, W)$ is the data fitting term (only influenced by the data). Both $\log p(b)$ and $\log p(W)$ do not depend on the available data and can be interpreted as regularization terms that bound the complexity of the predictor function.

The main idea is to reduce the sensitivity of the predictor on time instants that are far in the past.
We therefore enforce the {\em fading memory} assumption on $p(b)$ by assuming that the components of $b \in \mathbb{R}^{n_p}$ have zero mean and exponentially decaying variances: 
\begin{equation}
    \E b_j = 0 \text{ and } \E b^2_{n_p - j - 1} = \kappa \lambda^j \text{ for } j = 0, ...,  n_p-1
\end{equation}
where $\kappa \in \mathbb{R}^+$ and $\lambda \in (0, 1)$. Note the larger variance (larger scale) is associated to temporal indices close to the present time $t$.

\textbf{Remark}: To specify the prior, we need a density function $p(b)$ but up to now we only specified constraints on the first and second order moments. We therefore need to constrain the parametric family of prior distributions we consider. Any choice on the class of prior distributions lead to different optimal estimators. 
Among all the possible choices of prior families we choose the maximum entropy prior \citep{Cover}. Under constraints on first and second moment, the maximum entropy family of priors is the exponential family \citep{Cover}.
In our setting, we can write it as: 
\begin{equation}
    \log p_{\lambda, \kappa}(b) \propto - \norm{b}^2_{\Lambda^{-1}} - \log |\Lambda|
\end{equation}
where $\Lambda \in \mathbb{R}^{n_p \times n_p}$ is a diagonal matrix whose elements are $\Lambda_{j,j}= \kappa \lambda^j$ for $j=0, ..., n_p-1$.

The parameter $\lambda$ represents how fast the predictor's output `forgets' the past: the smaller $\lambda$, the lower the complexity. In practice, we do not have access to this information and indeed we need to estimate $\lambda$ from the data.

One would be tempted to estimate jointly $W, b, \lambda,\kappa$ (and possibly $\eta$) by minimizing the negative log of the joint posterior:
\begin{align}\label{eqn:MAP}
    \argmin_{b, W,\lambda,\kappa} \frac{1}{\eta^{2}} \norm{Y_f - \hat Y_{b, W}}^2 + & \log(\eta^2) - \log(p_{\lambda, \kappa}(B)) \nonumber \\ & - \log(p(W))
\end{align}
Unfortunately, this leads to a degeneracy since the joint negative log posterior goes to $-\infty$ when $\lambda \rightarrow 0$.

\textbf{Bayesian learning formulation for fading memory regularization}\label{sec:bayesian_formulation_appendix}:
The parameters describing the prior (such as $\lambda$) are typically estimated by maximizing the marginal likelihood, i.e., the likelihood of the data once the parameters ($b, W$) have been integrated out. Unfortunately, the task of computing (or even approximating) the marginal likelihood in this setup is prohibitive and one would need to resort to Monte Carlo sampling techniques. While this is an avenue worth investigating, we preferred to adopt the following variational strategy inspired by the linear setup.

Indeed, the model structure we consider is linear in $b$ and we can therefore stack the predictions of each available time index $t$ to get the following linear predictor on the whole future data: $\hat{Y}_{b, W} = F_W b$ where $F_W \in \mathbb{R}^{\nf \times n_p}$ and its rows are given by $\hat{X}_{\TCN}(Y_{i-n_p+1}^i)$ for $i=t, ..., t+\nf -1$. 

We are now ready to find an upper bound to the marginal likelihood associated to the posterior given by \Cref{eqn:posterior_appendix} with marginalization taken only w.r.t.\ $b$.

\begin{proposition}[from \citep{tipping2001sparse_linear_proof}]\label{prop:marginal_likelihood_on_b}
    The optimal value of a regularized linear least squares problem with feature matrix $F$ and parameters $b$ is given by the following equation:
    \begin{equation}\label{eq:optimal_ls_problem}
        \argmin_b \frac{1}{\eta^2}{\norm{Y_f - F b}^2}+  b^\top \Lambda^{-1} b = Y_f^\top \Sigma^{-1}Y_f
    \end{equation}
    with $ \Sigma:=F\Lambda F^\top A^T + \eta^2 I$.
\end{proposition}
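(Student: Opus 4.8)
The plan is to treat the objective as a strictly convex quadratic in $b$ and evaluate it at its unique minimizer. Writing $J(b) := \frac{1}{\eta^2}\norm{Y_f - F b}^2 + b^\top \Lambda^{-1} b$ and expanding the square gives $J(b) = b^\top M b - \frac{2}{\eta^2} Y_f^\top F b + \frac{1}{\eta^2} Y_f^\top Y_f$, where $M := \frac{1}{\eta^2} F^\top F + \Lambda^{-1}$ is symmetric positive definite (since $\Lambda$ is diagonal with strictly positive entries $\kappa\lambda^j$). First I would set the gradient to zero, obtaining the normal equations $M b^\star = \frac{1}{\eta^2} F^\top Y_f$ and hence the unique minimizer $b^\star = \frac{1}{\eta^2} M^{-1} F^\top Y_f$. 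Substituting this back and using the standard fact that the minimum of $b^\top M b - 2 c^\top b + d$ equals $d - c^\top M^{-1} c$ (here with $c = \frac{1}{\eta^2} F^\top Y_f$ and $d = \frac{1}{\eta^2} Y_f^\top Y_f$), the optimal value becomes $\frac{1}{\eta^2} Y_f^\top Y_f - \frac{1}{\eta^4} Y_f^\top F M^{-1} F^\top Y_f$.

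The remaining, and essential, step is to show that the matrix sandwiching $Y_f$ in this expression, namely $\frac{1}{\eta^2} I - \frac{1}{\eta^4} F M^{-1} F^\top$, coincides with $\Sigma^{-1}$ for $\Sigma = F\Lambda F^\top + \eta^2 I$. This is exactly the Sherman--Morrison--Woodbury identity applied with $A = \eta^2 I$, $U = F$, $C = \Lambda$, $V = F^\top$: it gives $(\eta^2 I + F\Lambda F^\top)^{-1} = \frac{1}{\eta^2} I - \frac{1}{\eta^4} F\bigl(\Lambda^{-1} + \frac{1}{\eta^2} F^\top F\bigr)^{-1} F^\top$, and the inner factor is precisely $M^{-1}$. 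I expect this matrix-identity step to be the main obstacle, chiefly in making the powers of $\eta$ and the inner inverse line up correctly; once it is verified, the claimed identity $\min_b J(b) = Y_f^\top \Sigma^{-1} Y_f$ follows immediately.

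As an independent cross-check I would also record the Bayesian reading: interpreting $J(b)$ as $-2$ times the log of the unnormalized joint density of $Y_f = F b + \epsilon$ with $\epsilon \sim \mathcal{N}(0, \eta^2 I)$ and $b \sim \mathcal{N}(0, \Lambda)$, completing the square in $b$ and integrating shows that the quadratic-in-$Y_f$ part of the Gaussian marginal $Y_f \sim \mathcal{N}(0, \Sigma)$ is exactly the minimized value, recovering the same formula without invoking Woodbury explicitly. This route both validates the computation and makes clear that the $A^\top$ appearing in the definition of $\Sigma$ in the statement is a typo, the intended covariance being $\Sigma = F\Lambda F^\top + \eta^2 I$.
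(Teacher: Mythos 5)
Your derivation is correct: completing the square to get the minimizer $b^\star = \frac{1}{\eta^2}M^{-1}F^\top Y_f$, evaluating the quadratic, and collapsing $\frac{1}{\eta^2}I - \frac{1}{\eta^4}FM^{-1}F^\top$ to $\Sigma^{-1}$ via Woodbury is exactly the standard argument behind this identity, and you are right that the $A^\top$ in the statement of $\Sigma$ is a typo for $\Sigma = F\Lambda F^\top + \eta^2 I$ (the paper uses precisely this form two lines later). Note that the paper itself offers no proof --- it imports the result from \citet{tipping2001sparse_linear_proof} --- so your write-up (including the Gaussian-marginal cross-check, which is how the cited source frames it and which explains why the paper then appends the $\log\det\Sigma$ term to form its variational upper bound) supplies the missing details rather than diverging from them.
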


\Cref{eq:optimal_ls_problem} guarantees that 
\[
\frac{1}{\eta^2}{\norm{Y_f - Fb}^2}+   b^\top \Lambda^{-1}  b + \log|{\Sigma}| \geq Y_f^\top \Sigma^{-1}Y_f + \log|{\Sigma}|,
\]
where the right hand side is (proportional to) the negative marginal likelihood with marginalization taken \emph{only} w.r.t.\ $b$.
Therefore, for fixed a $W$,
\[
\frac{1}{\eta^2}{\norm{Y_f - \hat Y_{b,W}}^2}+ b^\top \Lambda^{-1} b + \log|{F_W \Lambda F_W^\top + \eta^2 I}|
\]
is an upper bound of the marginal likelihood with marginalization over $b$ and does not suffer of the degeneracy alluded at before.

With this considerations in mind, and inserting back the optimization over ${W}$, the overall optimization problem we solve is 
\begin{align}\label{eqn:MAP_rewritten}
    \argmin_{b, W, \lambda \in (0,1), \kappa >0} \, & \frac{1}{\eta^2}{\norm{Y_f - \hat Y_{b,W}}^2} + \norm{b}_{\Lambda^{-1}}^2 \nonumber \\ & + \log|{F_W \Lambda F_W + \eta^2 I}| + \log p(W)
\end{align}

\textbf{Remark}: $\log p(W)$ defines the regularization applied on the remaining parameters of our architecture. In particular, we induce sparsity by applying $L^1$ regularization on $\mathcal{A}_{\TREND}$, $\mathcal{A}_{\SEAS}$, $\mathcal{A}_{\LIN}$ and $\mathcal{A}_{\TCN}$. Also, we constrain filters parameters to stay close to initialization by applying $L^2$ regularization on $\mathcal{K}_{\TREND}$, $\mathcal{K}_{\SEAS}$ and $\mathcal{K}_{\LIN}$.

\subsubsection{Fading memory in multivariate time series}\label{sec:TRIAD_multivariate}
In the case of multivariate time series, fading regularization can be applied either with a single fading coefficient $\lambda$ for all the time series or with different fading coefficients for each time series. In all the experiments in this paper, we chose to keep one single $\lambda$ for all the time series. In practice, this choice is sub-optimal and might lead to more overfitting than treating each time series separately: the `dominant' (slower) time series will highly influence the optimal $\lambda$.

\subsubsection{Features normalization}\label{sec:batch_normalization_appendix}
We avoid the non-identifiability of the product $F_W b$ by exploiting batch normalization: we impose that different features have comparable means and scales across time indices $i=0,...,n_p-1$. Non-identifiability occurs due to the product $F_W b$, if features have different scales across time indices (i.e., columns of the matrix $F_W$) the benefit of fading regularization might reduced since it can happen that features associated with small $b_i$ have large scale so that the overall contribution of the past does not fade.
Hence we use batch normalization to normalize time features.
Then we use an affine transformation (with parameters to be optimized) to jointly re-scale all the output blocks before the linear combination with $b$.

\subsection{Alternative CUMSUM Derivation and Interpretation} \label{sec:cumsum2}

In this section, we describe an equivalent formulation of the CUMSUM algorithm we derived in the main paper.
Before a change point, by construction we are under the distribution of the past.
Therefore, $\log \frac{\pf(y)}{\pp(y)} \leq 0 $ $ \forall y$, which in turn means that the cumulative sum $S_1^t$ will decrease as $t$ increases (negative drift).
After the change, the situation is opposite and the cumulative sum starts to show a positive drift, since we are sampling $y(i)$ from the future distribution $\pf$.
This intuitive behaviour shows that the relevant information to detect a change point can be obtained directly from the cumulative sum (along timestamps).
In particular, all we need to know is the difference between the value of the cumulative sum of log-likelihood ratios and its minimum value.

The CUMSUM algorithm can be expressed using the following equations: $v_t := S_1^t - m_t $, where $m_t:= \min_{j, 1\leq j \leq t} S_j^t$. The stopping time is defined as: $t_{\tstop} = \min\{t : v_t \geq \CMSthr\} = \min\{t : S_1^t \geq m_t + \CMSthr \}$. With the last equation, it becomes clear that the CUMSUM detection equation is simply a comparison of the cumulative sum of the log likelihood ratios along time with an adaptive threshold $m_t+\CMSthr$. Note that the adaptive threshold keeps complete memory of the past ratios.
The two formulations are equivalent because $S_1^t - m_t = h_t$.

\subsection{Variational Approximation of the Likelihood Ratio} \label{sec:variational_approximation_lik_ratio_appendix}

In this section, we present some well known facts on $f$-divergences and their variational characterization. Most of the material and the notation is from \citep{M_estimators_Jordan}.
Given a probability distribution $\p$ and a random variable $f$ measurable w.r.t.\ $\p$, we use $\int f d\p$ to denote the expectation of $f$ under $\p$. Given samples $x(1), ..., x(n)$ from $\p$, the empirical distribution $\p_n$ is given by $\p_n = \frac{1}{n} \sum_{i=1}^n \delta_{x(i)}$. We use $\int f d\p_n$ as a convenient shorthand for the empirical expectation $\frac{1}{n}\sum_{i=1}^n f(x(i))$. 

Consider two probability distributions $\p$ and $\q$, with $\p$ absolutely continuous w.r.t.\ $\q$. Assume moreover that both distributions are absolutely continuous with respect to the Lebesgue measure $\mu$, with densities $p_0$ and $q_0$, respectively, on some compact domain $\mathcal{X} \subset \mathbb{R}^d$.

\textbf{Variational approximation of the f-divergence}:
The $f$-divergence between $\p$ and $\q$ is defined as \citep{M_estimators_Jordan}
\begin{equation}\label{eq:f-divergence}
    D_f (\p, \q) := \int p_0 f\Big(\frac{q_0}{p_0}\Big) d\mu
\end{equation}
where $f: \mathbb{R} \to \mathbb{R}$ is a convex and lower semi-continuous function. Different choices of $f$ result in a variety of divergences that play important roles in various fields \citep{M_estimators_Jordan}.
\Cref{eq:f-divergence} is usually replaced by the variational lower bound: 
\begin{equation}\label{eq:variational-characterization}
    D_f (\p, \q) \geq \sup_{\phi \in \Phi} \int [\phi d\q - f^*(\phi)d\p]
\end{equation}
and equality holds iff the subdifferential $\partial f (\frac{q_0}{p_0})$ contains an element of $\Phi$.
Here $f^*$ is defined as the convex dual function of $f$.

In the following, we are interested in divergences whose conjugate dual function is smooth (which in turn defines commonly used divergence measures such as KL and Pearson divergence), so that we shall assume that $f$ is convex and differentiable.
Under this assumption, the notion of subdifferential is not required and the previous statement reads as: {\em equality holds iff $\partial f (\frac{q_0}{p_0}) = \phi$ for some $\phi \in \Phi$}.

\textbf{Remark}: The infinite-dimensional optimization problem in \Cref{eq:variational-characterization} can be written as $D_f (\p, \q) = \sup_{\phi \in \Phi}\E_\q \phi - \E_\p f^*(\phi)$. 

In practice, one can have an estimator of any $f$-divergence restricted to a functional class $\Phi$ by solving \Cref{eq:variational-characterization} \citep{M_estimators_Jordan}. 
Moreover, when $\p$ and $\q$ are not known one can approximate them using their empirical counterparts: $\p_n$ and $\q_n$. 
Then an empirical estimate of the $f$-divergence is: $\hat{D}_f (\p, \q) = \sup_{\phi \in \Phi}\E_{\q_n} \phi - \E_{\p_n} f^*(\phi)$.

\textbf{Approximation of the likelihood ratio}:
An estimate of the likelihood ratio can be directly obtained from the variational approximation of $f$-divergences. The key observation is the following: {\em equality on \Cref{eq:variational-characterization} is achieved iff $\phi = \partial f (\frac{q_0}{p_0})$}. This tells us that the optimal solution to the variational approximation provides us with an estimator of the composite function $\partial f (\frac{q_0}{p_0})$ of the likelihood ratio $\frac{q_0}{p_0}$. As long as we can invert $\partial f$, we can uniquely determine the likelihood ratio.

In the following, we shall get an empirical estimator of the likelihood ratio in two separate steps. We first solve the following:
\begin{equation}
    \hat{\phi}_n := \argmax_{\phi\in \Phi} \E_{\q_n} \phi - \E_{\p_{n}} f^*(\phi)
\end{equation}
which returns an estimator of $\partial f (\frac{q_0}{p_0})$, not the ratio itself. 
And then we apply the inverse of $\partial f$ to $\hat{\phi}_n$. We therefore have a family of estimation methods for the likelihood function by simply ranging over choices of $f$.

\textbf{Remark}: If $f$ is not differentiable, then we cannot invert $\partial f$ but we can obtain estimators of other functions of the likelihood ratio. For instance, we can obtain an estimate of the thresholded likelihood ratio by using a convex function whose subgradient is the sign function centered at 1.

\subsubsection{Likelihood ratio estimation with Pearson divergence}
In this section, we show how to estimate the likelihood ratio when the Pearson divergence is used.
With this choice, many computations simplify and we can write the estimator of the likelihood ratio in closed form. Other choices (such as the Kullback-Leibler divergence) are possible and legitimate, but usually do not lead to closed form expressions (see \citep{M_estimators_Jordan}).

The Pearson, or $\chi^2$, divergence is defined by the following choice: $f(t) := \frac{(t - 1)^2}{2}$. The associated convex dual function is : 
\begin{equation}\nonumber
    f^*(v) = \sup_{u \in \mathbb{R}} \Big\{uv - \frac{(u - 1)^2}{2}\Big\} = \frac{v^2}{2}+v.
\end{equation}

Therefore the Pearson divergence is characterized by the following:
\begin{equation}\label{eq:variational-bound-PE}
    PE(\p || \q) := \int p_0 \Big(\frac{q_0}{p_0} - 1 \Big)^2 d\mu \geq \sup_{\phi \in \Phi} \E_{\q} \phi - \frac{1}{2} \E_{\p} \phi^2  - \E_{\p} \phi.
\end{equation}

Solving the lower bound for the optimal $\phi$ provides us an estimator of $\partial f(\frac{q_0}{p_0}) = \frac{q_0}{p_0} - 1$. 
For the special case of the Pearson divergence, we can apply a change of variables which preserves convexity of the variational optimization problem \Cref{eq:variational-bound-PE} and provides a more straightforward interpretation.
Let the new variable be $z := \phi + 1$ with $z \in \mathcal{Z}$, which in this case is nothing but the inverse function of $\partial f$.
We get
\begin{equation}
    \sup_{\phi \in \Phi} \E_{\q} \phi - \frac{1}{2} \E_{\p} \phi^2  - \E_{\p} \phi = \sup_{z \in \mathcal{Z}} \E_{\q} z - \frac{1}{2} \E_{\p} z^2 -\frac{1}{2} 
\end{equation}
It is now trivial to see that $z$ is a `direct' approximator of the likelihood ratio (i.e., it does not estimate a composite map of the likelihood ratio). Therefore for simplicity, we shall employ 
\begin{equation}\label{eq:likelihood_ratio_estimator}
    \argmin_{\phi \in \Phi} \frac{1}{2} \E_{\p} \phi^2 - \E_{\q} \phi
\end{equation}
to build our `direct' estimator of the likelihood ratio.

Let the samples from $\p$ and $\q$ be, respectively, $x_p(i)$ with $i=1,...,n_p$ and  $x_q(i)$ with $i=1,...,n_q$. 
We define the empirical estimator of the likelihood ratio $\hat{\phi}_n$: 
\begin{equation}
    \hat{\phi}_n = \argmin_{\phi \in \Phi} \frac{1}{2n_p} \sum_{i=1}^{n_p} \phi(x_p(i))^2 - \frac{1}{n_q} \sum_{i=1}^{n_q} \phi(x_q(i)) .
\end{equation}

\textbf{A closed form solution}: 
Up to now we have not defined in which class of functions our approximator $\phi$ lives. As done in \citep{M_estimators_Jordan, subspace_change_point_detection}, we choose $\phi \in \Phi$ where $\Phi$ is a RKHS induced by the kernel $k$. 

We exploit the representer theorem to write a general function within $\Phi$ as:
\[
\phi(x) = \sum_{i=1}^{n_{tr}} k(x, x_{tr}(i)) \alpha_i,
\]
where we use $n_{tr}$ data which are the centers of the kernel sections used to approximate the unknown likelihood ratio (how to choose these centers is important and determines the approximation properties of $\hat{\phi}_n$). For now, we do not specify which data should be used as centers (we can use either data from $\p_n$ or from $\q_n$ or from both or simply use user specified locations). 

Let us define the following kernel matrices:
$K_p := K(X_p, X_{tr}) \in \mathbb{R}^{n_p \times n_{tr}}$, $K_q := K(X_q, X_{tr}) \in \mathbb{R}^{n_q \times n_{tr}}$, where $X_p := \{x_p(i)\}$, $X_q := \{x_q(i)\}$ and $X_{tr} := \{x_{tr}(i)\}$.

We therefore have: 
\begin{align}
    \hat{\phi}_n = &  \argmin_{\phi \in \Phi} \frac{1}{2n_p} \sum_{i=1}^{n_p} \phi(x_p(i))^2 - \frac{1}{n_q} \sum_{i=1}^{n_q} \phi(x_q(i)) \nonumber\\ 
    = & \argmin_{\alpha, \alpha \geq 0} \frac{1}{2n_p} \sum_{i=1}^{n_p} (\sum_{j=1}^{n_{tr}} k(x_p(i), x_{tr}(j))\alpha_j)^2 \nonumber \\ & - \frac{1}{\nf} \sum_{i=1}^{\nf} \sum_{j=1}^{n_{tr}} k(x_q(i), x_{tr}(j))\alpha_j \nonumber \\
    = & \argmin_{\alpha, \alpha \geq 0} \frac{1}{2n_p} \alpha^T K_p^T K_p \alpha - \frac{1}{\nf} \ones^T K_q \alpha  \nonumber
\end{align}

\textbf{Remark}: We impose the recombination coefficients $\alpha$ to be non negative since the likelihood ratio is a non negative quantity. The resulting optimization problem is a standard convex optimization problem with linear constraints which can be efficiently solved with Newton methods, nonetheless in general it does not admit any closed form solution.

We now relax the positivity constraints so that the optimal solution can be obtained in closed form. Moreover we add a quadratic regularization term as done in \citep{M_estimators_Jordan} which lead us to the following regularized optimization problem:
\begin{equation}\nonumber
    \argmin_{\alpha} \frac{1}{2n_p} \alpha^T K_p^T K_p \alpha - \frac{1}{\nf} \ones^T K_q \alpha + \frac{\gamma}{2} \norm{\alpha}^2_{\Phi}
\end{equation}
whose solution is trivially given by:
\begin{equation}\label{eqn:quadratic_optimal_solution}
    \hat{\alpha} = \frac{n_p}{\nf} \Big(K_p^T K_p + n_p\gamma I_{n_{tr}} \Big)^{-1} K_q^T \ones := \frac{n_p}{\nf} H^{-1} K_q^T \ones 
\end{equation}

The estimator of the likelihood ratio for an arbitrary location $x$ is given by the following: 
\begin{align}
    \frac{p_q(x)}{p_p(x)} & \approx \hat{\phi}_n(x) = K(x, X_{tr}) \hat{\alpha} \nonumber \\ & = \frac{n_p}{\nf} K(x, X_{tr}) \Big(K_p^T K_p + n_p\gamma I_{n_{tr}} \Big)^{-1} K_q^T \ones
\end{align}

\textbf{Remark}:
In the following we shall exploit RBF kernels which are defined by the length scales $\sigma$.

\subsection{Subspace likelihood ratio estimation and CUMSUM}\label{sec:subsapce_likelihood_ratio_estimation_appendix}

In this section we describe our subspace likelihood ratio estimator and its relation to the CUMSUM algorithm. The CUMSUM algorithm requires to compute the likelihood ratio $\frac{\pf(y(t)\mid Y_\tchange^{t-1})}{\pp(y(t) \mid Y_\tchange^{t-1})}$ for each time $t$. We denote $\pp$ as the normal density and $\pf$ as the abnormal one (after the anomaly has occurred). 

We shall proceed to express the conditional probability $p(y(t) \mid Y_1^{t-1})$ using our predictor. In particular it is always possible to express the optimal (unknown) one-step ahead predictor as: 
\begin{equation}\label{eqn:one-step-predictor}
    \hat{y}_{t \mid t-1} = F^*(Y_{t-K+1}^t) := \E[y(t) \mid Y_{t-K+1}^t] 
\end{equation} 
which is a deterministic function given the past of the time series (whose length is $K$).
So that the data density distribution can be written in innovation form (based on the optimal prediction error) as:
\begin{equation}\label{eqn:innotation_form}
y(t) = F^*(Y_{t-K+1}^t) + e(t)
\end{equation}
where $e(t) :=  y(t) - F^*(Y_{t-K+1}^t)$ is, by definition, the one step ahead prediction error (or  \emph{innovation sequence}) of $y(t)$ given its past. 
We therefore have: 
$p(y(t)\mid Y_{t-K+1}^t) = p(e(t) \mid Y_{t-K+1}^t)$. Where $e(t)$ is the optimal prediction error for each time $t$ and is therefore indipendent on each time $t$. 

\textbf{Remark}: 
In practice we do not know $F^*$ and we use our predictor learnt from normal data as a proxy. This implies the prediction residuals are approximately independent on normal data (the predictor can explain data well), while the prediction residuals are, in general, correlated on abnormal data.   

To summarize: under normal conditions the joint distribution of $Y_\tchange^t$ can be written as: 
\begin{equation}
    p(Y_\tchange^t) = \prod_{i=\tchange}^t p(y(i) \mid Y_\tchange^{i-1}) = \prod_{i=\tchange}^t p(e(i))
\end{equation}
in the normal conditions, and as:
\begin{equation}
    p(Y_\tchange^t) = \prod_{i=\tchange}^t p(y(i) \mid Y_\tchange^{i-1}) = \prod_{i=\tchange}^t p(e(i) \mid E_\tchange^{i-1})
\end{equation}
in the abnormal conditions.

These two conditions in turn influence the log likelihood ratio test as follows: 
under $H_0 \implies \prod_{i=\tchange}^t \frac{\pf(e(i))}{\pp(e(i))}$ while under $H_\tchange \implies \prod_{i=\tchange}^t \frac{\pf(e(i) \mid E_\tchange^{i-1})}{\pp(e(i))}$. 
The main issue here is the numerator under $H_\tchange$: the distribution of residuals changes at each time-stamp (it is a conditional distribution) and $\pf(e(i) \mid E_\tchange^{i-1})$ is difficult to approximate (it requires the model of the fault). 
In the following we show that replacing $\pf(e(i) \mid E_\tchange^{i-1})$ with $\pf(e(i))$ allows us to compute a lower bound on the cumulative sum. Such an approximation is necessary to estimate the likelihood ratio in abnormal conditions, the main downside of this approximation is that the detector becomes slower (it needs more time to reach the stopping time threshold). 

\textbf{Applying the independent likelihood test in a correlated setting}: 
We now show that treating $ \pf(e(i) \mid E_\tchange^{i-1})$ as independent random variables $ \pf(e(i))$ for $i=1,...,t$ allows us to compute a lower bound on the log likelihood $\log {\Omega}_\tchange^t$ (i.e. the cumulative sum). We denote the cumulative sum of the log likelihood ratio using independent variables as $\log \bar{\Omega}_\tchange^t = \sum_{i=\tchange}^t \log \frac{\pf(e(i))}{\pp(e(i))}$
\begin{proposition}\label{prop:lower-bound-CUMSUM}
    Assume a change happens at time $\tchange$ so that $H_\tchange$ is true and the following log likelihood ratio holds true: $\log \Omega_\tchange^t = \sum_{i=\tchange}^t \log \frac{\pf(e(i) \mid E_\tchange^{i-1})}{\pp(e(i))}$. Then it holds $\log \Omega_\tchange^t \geq \log \bar{\Omega}_\tchange^t$. 
\end{proposition}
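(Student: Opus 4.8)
The plan is to prove $\log\Omega_{\tchange}^t \ge \log\bar{\Omega}_{\tchange}^t$ by forming the difference of the two cumulative sums and collapsing it, via the chain rule, into a single relative entropy that is non-negative by Gibbs' inequality. First I would subtract the two statistics. Because the predictor whitens the normal residuals (established earlier in this section), under $\pp$ the innovations $e(i)$ are independent, so $\pp(e(i)\mid E_{\tchange}^{i-1}) = \pp(e(i))$ and the two statistics share identical denominators that cancel termwise. What survives is
\[
\log\Omega_{\tchange}^t - \log\bar{\Omega}_{\tchange}^t = \sum_{i=\tchange}^t \log\frac{\pf(e(i)\mid E_{\tchange}^{i-1})}{\pf(e(i))}.
\]

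The key step is to telescope the conditional numerators. By the chain rule for the abnormal law, and noting that the $i=\tchange$ term has an empty conditioning set, $\prod_{i=\tchange}^t \pf(e(i)\mid E_{\tchange}^{i-1}) = \pf(E_{\tchange}^t)$, the genuine joint density of the post-change block. Hence the difference equals $\log\big[\pf(E_{\tchange}^t)\big/\prod_{i=\tchange}^t \pf(e(i))\big]$, the log-ratio of the true joint to its independence surrogate. Since the proposition assumes $H_{\tchange}$, the block $E_{\tchange}^t$ is distributed according to $\pf$, so this log-ratio is precisely the relative entropy $\KL\big(\pf(E_{\tchange}^t)\,\|\,\prod_{i=\tchange}^t \pf(e(i))\big)$, the total correlation of the abnormal residuals. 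By Gibbs' inequality it is non-negative; equivalently it decomposes as $\sum_{i=\tchange}^t I\big(e(i);E_{\tchange}^{i-1}\big)\ge 0$, a sum of conditional mutual informations. This yields $\log\Omega_{\tchange}^t \ge \log\bar{\Omega}_{\tchange}^t$, and the gap is exactly the temporal dependence that the independence assumption discards, so the approximate statistic accumulates less drift and the detector is slower, matching the remark in the main text.

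I expect the main obstacle to be the correct handling of the conditional numerator rather than any difficult estimate. The telescoping must track the conditioning set carefully so that it rebuilds $\pf(E_{\tchange}^t)$ and never conditions on data prior to the change; and the resulting log-ratio must be recognized as an aggregate relative entropy rather than attacked summand by summand, because the individual terms $\log[\pf(e(i)\mid E_{\tchange}^{i-1})/\pf(e(i))]$ are not separately sign-definite. The guiding intuition that fixes the sign is the maximum-likelihood/Gibbs principle: replacing each true conditional by its marginal substitutes a mismatched independence model for the correct abnormal model, and the correct model necessarily scores at least as high, so the independence-based cumulative sum can only under-count. Once the chain-rule telescoping and the relative-entropy identification are in place, non-negativity is immediate.
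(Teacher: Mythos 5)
Your opening reduction coincides with the paper's: the paper's proof begins from exactly the factorization
\[
\frac{\pf(e(i) \mid E_\tchange^{i-1})}{\pp(e(i))} \;=\; \frac{\pf(e(i) \mid E_\tchange^{i-1})}{\pf(e(i))}\cdot\frac{\pf(e(i))}{\pp(e(i))},
\]
which after telescoping gives $\log \Omega_\tchange^t - \log \bar{\Omega}_\tchange^t = \log\bigl[\pf(E_\tchange^t)/\prod_{i=\tchange}^t \pf(e(i))\bigr]$. The gap is in your very next sentence: you assert that because $E_\tchange^t$ is distributed according to $\pf$, this log-ratio ``is precisely the relative entropy'' $\KL\bigl(\pf(E_\tchange^t)\,\|\,\prod_{i=\tchange}^t \pf(e(i))\bigr)$ and hence non-negative by Gibbs' inequality. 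That identification is false. A realized log-likelihood ratio is a random variable; the relative entropy is its \emph{expectation} under $\pf$, and pointwise the quantity has no definite sign. Concretely, if consecutive residuals under $\pf$ are jointly Gaussian with positive correlation, any realization in which they deviate in opposite directions has joint density strictly below the product of marginals, so the log-ratio is negative on that trajectory. Your closing remark --- that the ratio must be treated as an ``aggregate relative entropy'' rather than summand by summand --- repeats the same error one level up: neither the individual terms nor their sum is sign-definite; only the mean is. So your argument does not establish the pathwise inequality, and indeed no argument can, since the pathwise statement fails in general.

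The paper's proof inserts exactly the step you skipped: it applies $\E_{\pf(E_\tchange^t)}$ to the decomposition, identifies the expected first factor as the (multi-)mutual information $MI\bigl(\pf(E_\tchange^t);\prod_{i=\tchange}^t \pf(e(i))\bigr)\ge 0$ and the expected second factor as $\KL\bigl(\prod_{i=\tchange}^t \pf(e(i))\,\|\,\prod_{i=\tchange}^t \pp(e(i))\bigr)$, concluding that the inequality holds \emph{in expectation} --- i.e., as a statement about the drift of the CUMSUM statistic, which is what supports the remark that the independence approximation makes the detector slower. (The paper signals this reading just before the proof, noting that the cumulative sum is an estimate of the expected log-likelihood ratio.) To repair your write-up, take expectations under $\pf(E_\tchange^t)$ immediately after your telescoping step, note that $\E_{\pf(E_\tchange^t)}\log\bigl[\pf(e(i))/\pp(e(i))\bigr]$ depends only on the marginal of $e(i)$, and conclude $\E\log\Omega_\tchange^t \ge \E\log\bar{\Omega}_\tchange^t$; your decomposition of the gap into conditional mutual informations $\sum_{i=\tchange}^t I\bigl(e(i);E_\tchange^{i-1}\bigr)$ is then valid, and is the same quantity the paper calls $MI$.
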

\begin{proof}
By simple algebra we can write:
    \begin{equation}\nonumber
        \frac{\pf(e(i) \mid E_\tchange^{i-1})}{\pp(e(i))} = \frac{\pf(e(i) \mid E_\tchange^{i-1})}{\pf(e(i))} \frac{\pf(e(i))}{\pp(e(i))} \qquad \forall i
    \end{equation}
Now recall the cumulative sum of the log-likelihood ratios taken under the current data generating mechanism $\pf(E_1^t)$ provides an estimate of the expected value of the log-likelihood ratio. Due to the correlated nature of data $E_1^t$ the samples are drawn from a multidimensional distribution of dimension $t$ (a sample from this distribution is an entire trajectory from $\tchange$ to $t$).

We now take the expectation of previous formula w.r.t.\ the `true' distribution $\pf(E_1^t)$: 
    \begin{align}
        \E_{\pf(E_\tchange^t)}\Omega_\tchange^t  & = \E_{\pf(E_\tchange^t)} \log \prod_{i=\tchange}^t \frac{\pf(e(i) \mid E_\tchange^{i-1})}{\pp(e(i))} \nonumber \\
        & = \E_{\pf(E_\tchange^t)} \log \prod_{i=\tchange}^t \frac{\pf(e(i)\mid E_\tchange^{i-1})}{\pf(e(i))} \nonumber \\ & \qquad + 
            \E_{\pf(E_\tchange^t)} \log \prod_{i=\tchange}^t \frac{\pf(e(i))}{\pp(e(i))} \nonumber \\
        & = MI \Big(\pf(E_\tchange^t); \prod_{i=\tchange}^t \pf(e(i)) \Big) \nonumber \\ & \qquad + KL \Big(\prod_{i=\tchange}^t \pf(e(i)) \Big|\Big| \prod_{i=\tchange}^t \pp(e(i))\Big) \nonumber \\
        & \geq KL \Big(\prod_{i=\tchange}^t \pf(e(i)) \Big|\Big| \prod_{i=\tchange}^t \pp(e(i))\Big) \nonumber
    \end{align}
    where we used the fact the mutual information is always non negative.
\end{proof}

\textbf{How to approximate pre and post fault distributions}: 
Both $\pp$ and $\pf$ are not known and their likelihood ratio need to be estimated from available data.
From \Cref{sec:lik_ratio_estimation} we know how to approximate the likelihood ratio given two set of data without estimating the densities. In our anomaly detection setup we define these two sets as: $\Ep := E_{t - \nf - \nf+1}^{t-\nf}$ and $\Ef := E_{t - \nf+1}^{t}$. So that given current time $t$ we look back at a window of length $\nf + \nf$. The underlying assumption is that under $H_\tchange$ normal data are present in $\Ep$ and abnormal ones in $\Ef$. We estimate the likelihood ratio $\frac{\pf(e(t))}{\pp(e(t))}$ at each time $t$ by assuming both $\Ep$ and $\Ef$ data are independent (see Proposition \ref{prop:lower-bound-CUMSUM}) and cumulate their $\log$ as $t$ increases.
 
\textbf{How do $\nf$ and $\nf$ affect our detector?}
The choice of the windows length ($\nf$ and $\nf$) is fundamental and highly influences the likelihood estimator. Using small windows makes the detector highly sensible to both point and sequential outliers, while larger windows are better suited to estimate only sequential outliers. We now assume $\nf = \nf$ and study how small and large values affect the behaviour of our detector in simple working conditions.

In \Cref{fig:change-point} and \Cref{fig:point-outlier} we compute the cumulative sum of log likelihood ratios estimated from data on equally sized windows. Intuitively any local minimum after a ``large'' (depending on the threshold $\CMSthr$) increase of the cumulative sum is a candidate abnormal point. 

\begin{figure*}[ht]
\centering
\begin{minipage}[b]{0.48\linewidth}
    \includegraphics[width=0.99\linewidth]{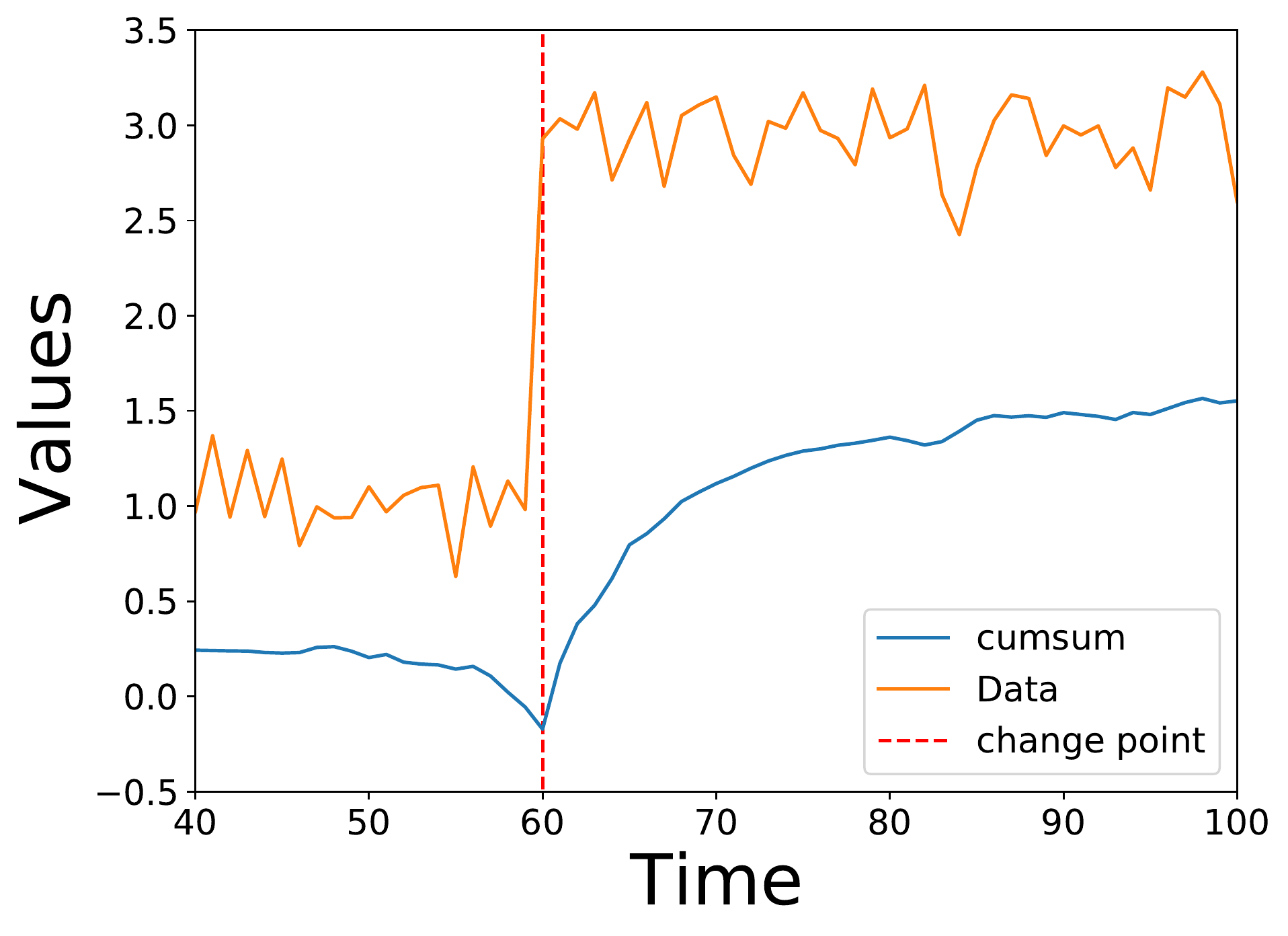}
    \caption[Change point]{\textbf{Change point}: Cumulative sum (blue) obtained with our method in a synthetic example. We use the cumulative sum of estimated likelihood ratios on data in which a change point is present at $t=60$. We use $\nf=\nf=20$ and kernel length scale=0.2}
    \label{fig:change-point}
\end{minipage}
\quad
\begin{minipage}[b]{0.48\linewidth}
    \centering
     \includegraphics[width=0.97\linewidth]{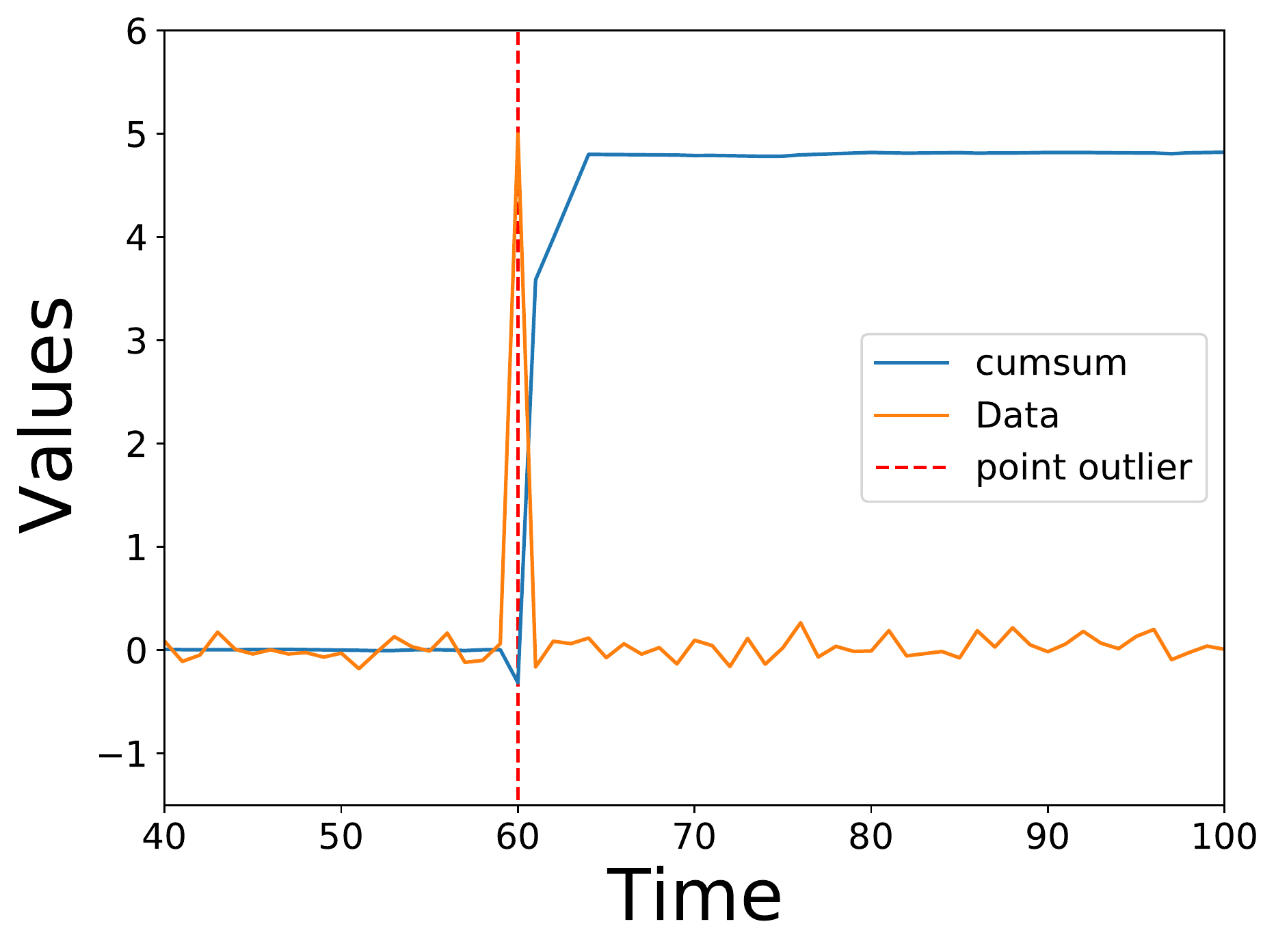}
    \caption[Point anomaly]{\textbf{Point anomaly}: Cumulative sum (blue) obtained with our method in a synthetic example. We use the cumulative sum of estimated likelihood ratios on data in which a point outlier is present at $t=60$. We use $\nf=\nf=2$ and kernel length scale=2.}
    \label{fig:point-outlier}
\end{minipage}
\end{figure*}

In \Cref{fig:large_win} and \Cref{fig:small_win} we compare the cumulative sum of estimated likelihood ratios on data in which both sequential and point outliers are present. In particular we highlight that large window sizes $\nf$ and $\nf$ are usually not able to capture point anomalies \Cref{fig:large_win} while using small window sizes allow to detect both (at the expenses of a more sensitive detector) \Cref{fig:small_win}.

\begin{figure*}[ht]
\centering
\begin{minipage}[b]{0.48\linewidth}
    \centering
    \includegraphics[width=0.99\linewidth]{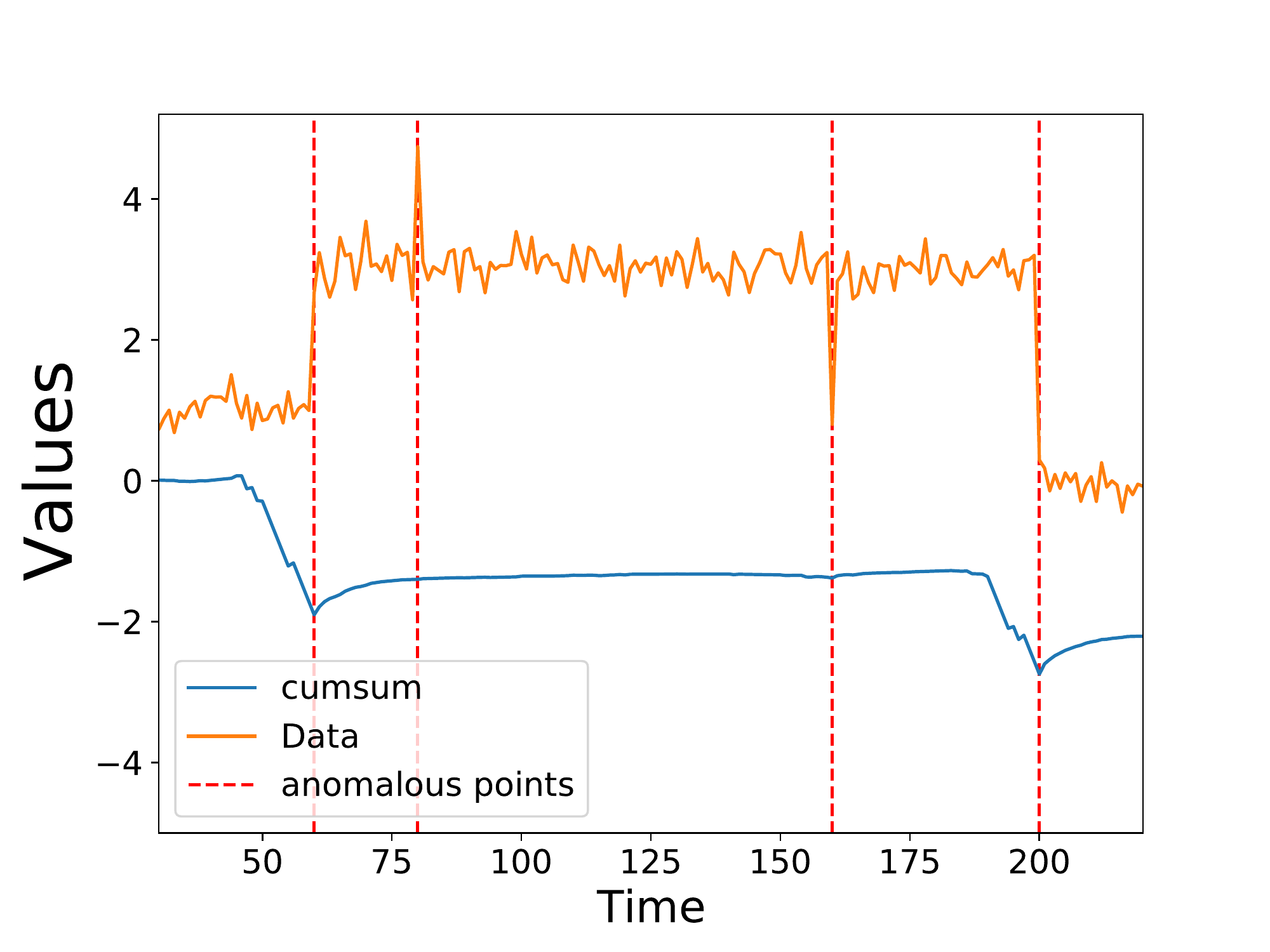}
    \caption{\textbf{Large $\nf$ and $\nf$}: Cumulative sum (blue) obtained with our method in a synthetic example. We use the cumulative sum of estimated likelihood ratios on data which contain both change points ($t=60$ and $t=200$) and point outliers ($t=80$ and $t=160$). We use $\nf=\nf=20$ and kernel length scale=1.}
    \label{fig:large_win}
\end{minipage}
\quad
\begin{minipage}[b]{0.48\linewidth}
    \centering
    \includegraphics[width=0.99\linewidth]{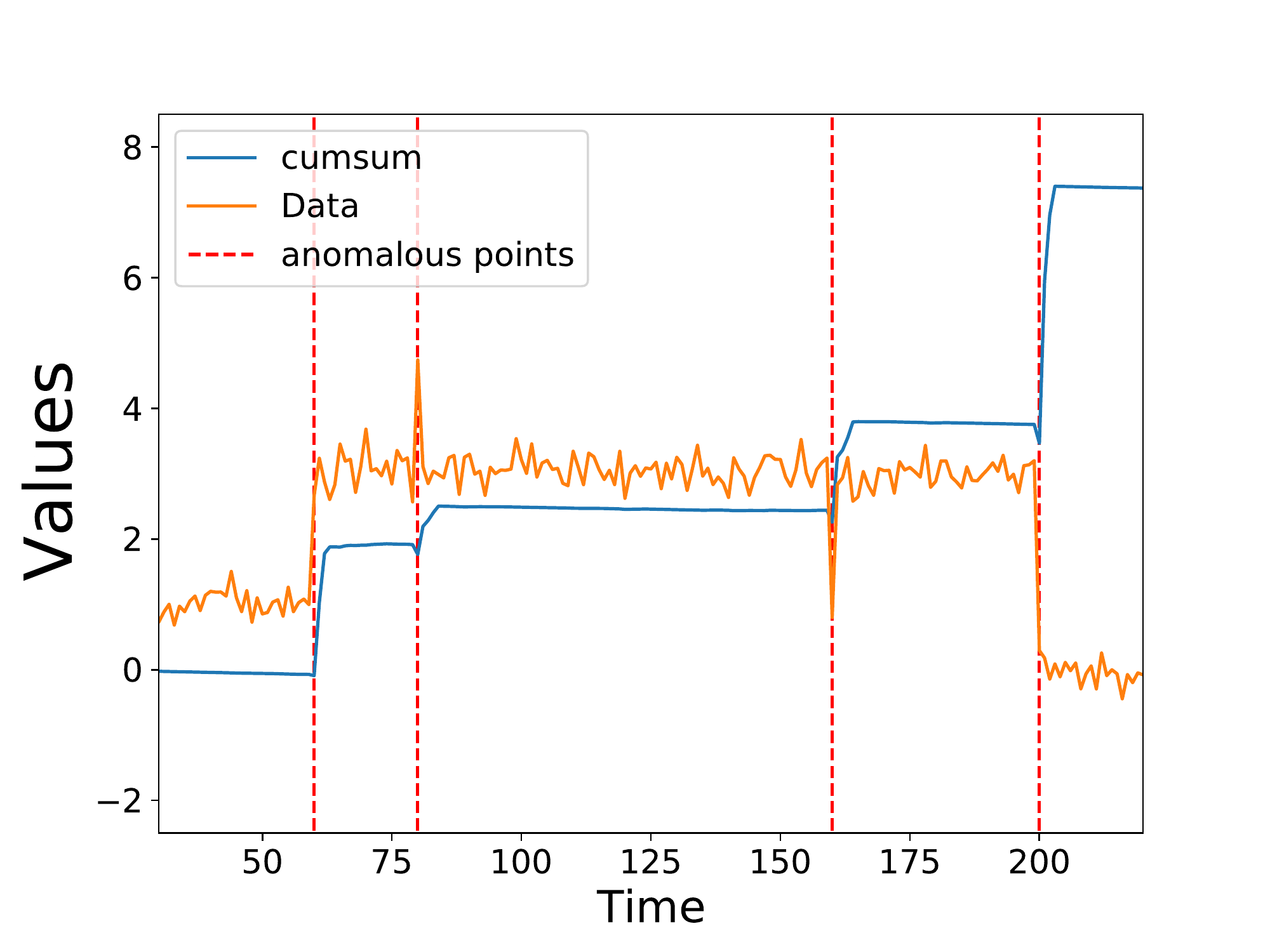}
    \caption{\textbf{Small $\nf$ and $\nf$}: Cumulative sum (blue) obtained with our method in a synthetic example. We use the cumulative sum of estimated likelihood ratios on data which contain both change points ($t=60$ and $t=200$) and point outliers ($t=80$ and $t=160$). We use $\nf=\nf=3$ and kernel length scale=5.}
    \label{fig:small_win}
\end{minipage}
\end{figure*}

\subsection{Datasets}\label{sec:datasets_appendix}
\subsubsection{Credit Card}
Credit card dataset \footnote{https://www.openml.org/d/1597} is collected by openML, it contains credit cards transactions in September 2013 by European cardholders. The fraudulent transactions are labeled as outliers.

\subsubsection{CICIDS}
CICIDS dataset \footnote{https://www.unb.ca/cic/datasets/ids-2017.html} is collected by Canadian Institute for Cybersecurity in 2017. We follow \cite{multivariate_benchmark} and adopt ``Thurday-WorkingHours-Morning-WebAttacks'' file in 2017 datasets which contains 3 kinds of intrusion attack: XSS, SQL injection and brute force attack. See \cite{multivariate_benchmark} for details on the pre-processing and data cleaning. 

\subsubsection{GECCO}
GECCO dataset \cite{multivariate_benchmark} is collected by SPOTSeven Lab. This dataset describes an IoT for drinking water monitoring application and contains both point and pattern-wise outliers.

\subsubsection{SWAN-SF}
SWAN-SF dataset \cite{multivariate_benchmark}
is collected by Harvard Dataverse, it describes an extreme space wather detection application. We follow \cite{multivariate_benchmark} pre-processing and data cleaning procedure. This dataset contains both point and pattern-wise outliers.

\subsubsection{Yahoo Dataset}
Yahoo Webscope dataset \citep{yahoo} is a publicly available dataset containing 367 real and synthetic time series with point anomalies, contextual anomalies and change points. Each time series contains 1420-1680 time stamps. This dataset is further divided into 4 sub-benchmarks: A1 Benchmark, A2 Benchmark, A3 Benchmark and A4 Benchmark. A1Benchmark is based on the real production traffic to some of the Yahoo! properties. The other 3 benchmarks are based on synthetic time series. A2 and A3 Benchmarks include point outliers, while the A4Benchmark includes change-point anomalies. 
All benchmarks have labelled anomalies. We use such information only during evaluation phase (since our method is completely unsupervised).

\subsubsection{NAB Dataset}
NAB (Numenta Anomaly Benchmark) \citep{Numenta} is a publicly available anomaly detection benchmark. It consists of 58 data streams, each with 1,000 - 22000 instances. This dataset contains streaming data from different domains including read traffic, network utilization, on-line advertisement, and internet traffic. As done in \citep{tadgan} we choose a subset of NAB benchmark, in particular we focus on the NAB Traffic and NAB Tweets benchmarks.

\subsubsection{CO2 Dataset}
We test the prediction and interpretability capabilities of our model on the CO2 dataset from kaggle
\footnote{https://www.kaggle.com/txtrouble/carbon-emissions}. The main goal here is to predict both trend and periodicity of CO2 emission rates on different years. Note this is not an Anomaly detection task.

\begin{table*}[ht]
\centering
\small
\setlength{\tabcolsep}{5pt}
\vspace{-.3cm}
\caption{\textbf{Multivariate datasets summaries}. We report some properties of the datasets taken from \cite{multivariate_benchmark, SMDataset, PSM_dataset}. 
}
\label{tab:datasets_summaries}
    \begin{tabularx}{\textwidth}{p{0.01cm} p{1.8cm} * {6}{C}}
         \topleftdesc{Properties}{13}
            & \multicolumn{4}{c}{\bf TODS} & \multicolumn{2}{c}{{\bf Server application}}  \\
          \cmidrule(lr){3-6} \cmidrule(lr){7-8} 
        &  & Credit Card & CICIDS & GECCO & SWAN-SF & SMD & PSM \\
            \cmidrule(lr){2-8}
        & dimension        & 29  & 79  & 10 & 39 & 38  &  25   \\
        & \# Training      &   &  &  &  & 566,724  &  105,984   \\
        & \# Validation    &   &  &  &  & 141,681  &  26,497   \\
        & \# Test          &   &  &  &  & 708,420  &  87,841   \\
        & \# Tot data      & 284,807  & 170,231 & 138,521 & 120,000 & 1,416,825  &  220,322   \\
        & \% anomalies     & 0.173 \%  & 1.281 \%  & 1.246 \% & 23.8 \% & 4.2 \%  &  27.8 \%   \\
    \end{tabularx}
\end{table*}

\begin{table*}[ht]
\centering
\small
\setlength{\tabcolsep}{5pt}
\vspace{-.3cm}
\caption{\textbf{Univariate datasets summaries}. We report some properties of the datasets used (see \citep{tadgan} for mode details).
}
\label{tab:univariate_datasets_summaries}
    \begin{tabularx}{\textwidth}{p{0.01cm} p{1.8cm} * {7}{C}}
         \topleftdesc{Properties}{13}
            & \multicolumn{4}{c}{\bf Yahoo} & \multicolumn{2}{c}{{\bf NAB}} & \multicolumn{1}{c}{\bf Kaggle} \\
          \cmidrule(lr){3-6} \cmidrule(lr){7-8} \cmidrule(lr){9-9} 
        &  & A1 & A2 & A3 & A4 & Traffic & Tweets & CO2\\
            \cmidrule(lr){2-9}
        & \# signals          & 67  & 100 & 100 & 100 & 7 & 10   & 9 \\
        & \# anomalies        & 178 & 200 & 939 & 835 & 14 & 33  &  \\
        & $\quad$ point       & 68  & 33  & 935 & 833 & 0 & 0    &  \\
        & $\quad$ sequential  & 110 & 167 &   4 &   2 & 14 & 33  &  \\
        & \# anomalous points & 1669 & 466 & 943 & 837 & 1560 & 15651  &  \\
        & $\quad$ (\% tot)    & $1.8 \%$ & $0.32  \%$ & $0.56 \%$ & $0.5 \%$ & $9.96 \%$ & $9.87 \%$  &  \\
        & \# data points      & 94866 & 142100 &   168000 &   168000 & 15662 & 158511  & 4323 \\
    \end{tabularx}
\end{table*}

\subsubsection{NYT Dataset} \label{sec:NYT_dataset_BERT_discussion}
The New York Times Annotated Corpus\footnote{https://catalog.ldc.upenn.edu/LDC2008T19} \citep{NYT} 
contains over 1.8 million articles written and published by the New York Times between January 1, 1987 and June 19, 2007.
We pre-processed the lead paragraph of each article with a pre-trained BERT model \citep{BERT} from the HuggingFace Transformers library \citep{wolf-etal-2020-transformers} and extracted the 768-dimensional hidden state of the [CLS] token (which serves as an article-level embedding).
For each day between January 1, 2000 and June 19, 2007, we took the mean of the embeddings of all articles from that day.
Finally, we computed a PCA and kept the first 200 principal components (which explain approximately 95\% of the variance), thus obtaining a 200-dimensional time series spanning 2727 consecutive days.
Note that we did not use any of the dataset's annotations, contrary to prior work such as \citet{NYT-baseline}.

\subsection{Experimental setup}\label{sec:experiments_appendix}
In this section, we shall describe the experimental setup we used to test STRIC. 

\textbf{Data preprocessing}: Before learning the predictor we standardize each dataset to have zero mean and standard deviation equals to one. As done in \citep{ad_survey} we note standardization is not equal to normalization, where data are forced to belong to the interval $(0,1)$. Normalization is more sensitive to outliers, thus it would be inappropriate to normalize our datasets, which contain outliers. 

We do not apply any deseasonalizing or detrending pre-processing.

\textbf{Data splitting}: We split each dataset into training and test sets preserving time ordering, so that the first data are used as train set and the following ones are used as test set.  The data used to validate the model during optimization are last $10 \%$ of the training dataset. Depending on the experiment, we choose a different percentage in splitting train and test. When comparing with \citep{ad_survey} we used $30 \%$ as training data, while when comparing to \citep{deepant} we use $40 \%$. Such a choice is dictated by the particular (non uniform) experimental setup reported in \citep{ad_survey, deepant} and has been chosen to produce comparable results with state of the art methods present in literature. For SMD and PSM we maintain the standard train-validation-test split, see \Cref{tab:datasets_summaries}.

\textbf{Evaluation metrics}: 
We compare different predictors by means of the RMSE (root mean squared error) on the one-step ahead prediction errors. Given a sequence of data $Y_1^N$ and the one-step ahead predictions $\hat{Y}_1^N$ the RMSE is defined as: $\sqrt{\frac{1}{N} \sum_{i=1}^N \norm{y(i) - \hat{y}(i)}^2}$.

As done in \citep{ad_survey} we compare different anomaly detection methods taking into account several metrics. We use F1-Score which is defined as the armonic mean of Precision and Recall (see \citep{ad_survey, deepant}) and another metric that is often used is {\em receiver operating characteristic curve}, ROC-Curve, and its associated metric {\em area under the curve} (AUC). The AUC is defined as the area under the ROC-Curve. This metric is particularly useful in our anomaly detection setting since it describes with an unique number {\em true positive rate} and {\em false positive rate} on different threshold values. We now follow \citep{ad_survey} to describe how AUC is computed. 
Let the {\em true positive rate} and {\em false positive rate} be defined, respectively, as: $TPR = \frac{TP}{P}$ and $FPR = \frac{FP}{N}$, where $TP$ stands for {\em true positive}, $P$ for {\em positive}, $FP$ for {\em false positive} and $N$ for {\em negative}. To copute the ROC-Curve we use different thresholds on our anomaly detection method. We therefore have different pairs of $TPR$ and $FPR$ for each threshold. These values can be plotted on a plot whose $x$ and $y$ axes are, respectively: $FPR$ and $TPR$. The resulting curve starts at the origin and ends in the point (1,1). The AUC is the area under this curve. In anomaly detection, the AUC expresses the probability that the measured algorithm assigns a random anomalous point in the time series a higher anomaly score than a random normal point.

\textbf{Hardware}: 
We conduct our experiments on the following hardware setup: 
\begin{itemize}
    \item Processor: Intel(R) Core(TM) i9-10980XE CPU @ 3.00GHz
    \item RAM: 128 Gb
    \item GPU: Nvidia TITAN V 12Gb and RTX 24Gb 
\end{itemize}

\begin{figure*}
    \centering
    \includegraphics[width=0.90\linewidth]{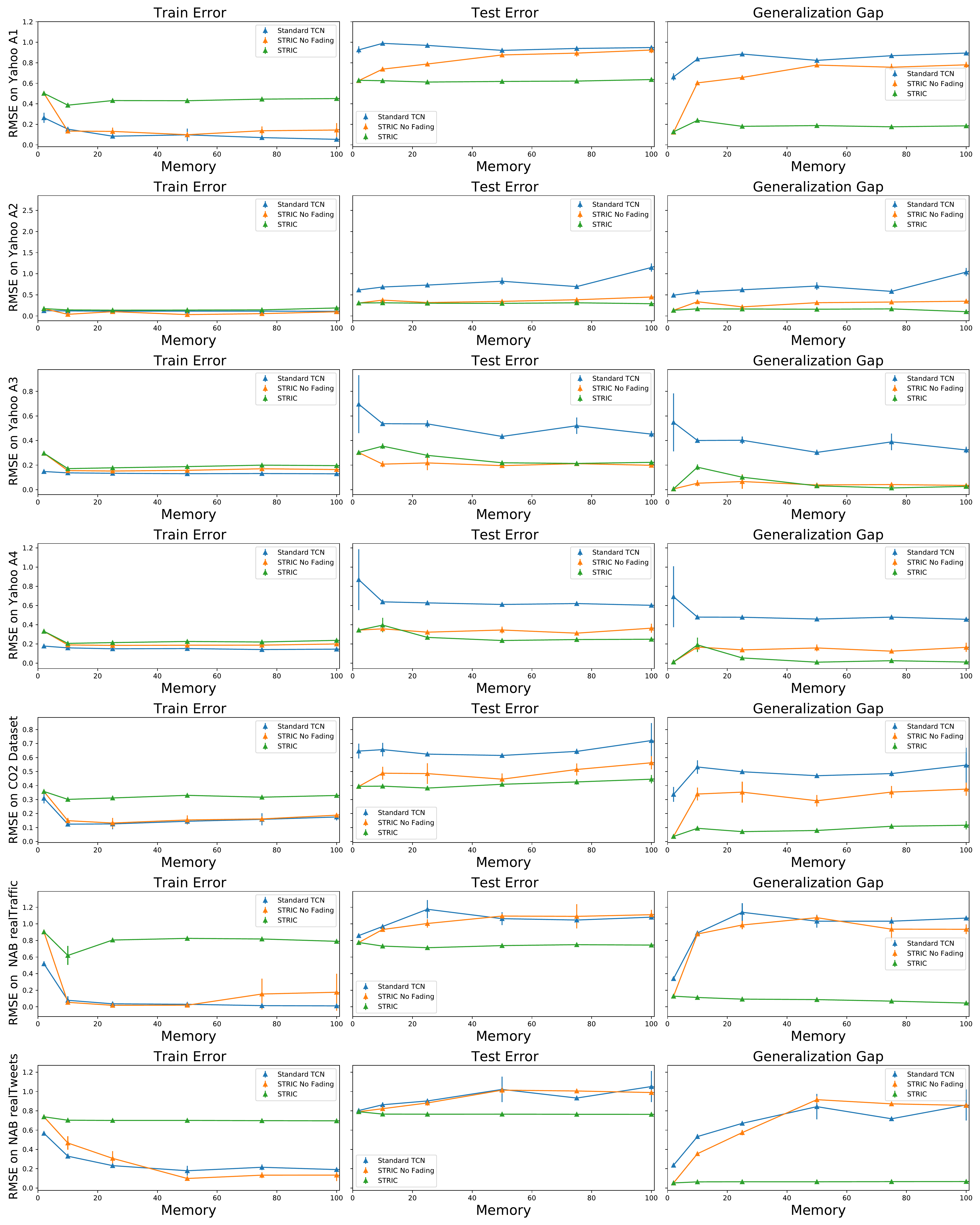}
    \caption{\textbf{Ablation studies on different datasets}: Effects of interpretable blocks and fading regularization on model's forecasting as the available window of past data increases (memory). \textbf{Left Panel}: Train error. \textbf{Center Panel}: Test error. \textbf{Right Panel}: Generalization Gap. 
    The test error of STRIC is uniformiy smaller than a standard TCN (without interpretable blocks nor fading regularization). Adding interpretable blocks to a standard TCN improves generalization for fixed memory w.r.t.\ Standard TCN but get worse (overfitting occurs) as soon as the available past data horizon increase. Fading regularization is effective: STRIC generalization GAP is almost constant w.r.t.\ past horizon.}
    \label{fig:interpretable_plots_vs_TCN_all_exps}
\end{figure*}

\textbf{Hyper-parameters}: All the experiments we carried out are uniform on the optimization hyper-parameters. In particular we fixed the maximum number of epochs to 300, the learning rate to 0.001 and batch size to 100. We optimize each model using Adam and early stopping. 

We fix STRIC's first module hyper-parameters as follows: 
\begin{itemize}
    \item number of filter per block: $l_0=10$, $l_1=100$, $l_2=200$
    \item linear filters kernel lengths ($N_0$, $N_1$, $N_2$): half predictor's memory 
\end{itemize}
In all experiments we either use a TCN composed of 3 hidden layers with 300 nodes per layer or a TCN with 8 layers and 32 nodes per layer. Moreover we chose $N_3=5$ (TCN kernels' lengths) and relu activation functions \citep{TCN_Koltun}.

\begin{table*}
\centering
\small
\setlength{\tabcolsep}{5pt}
\vspace{-.3cm}
\caption[Comparison with SOTA anomaly detectors]{\textbf{Comparison with SOTA anomaly detectors:} We compare STRIC with other anomaly detection methods (see \Cref{sec:discussion-on-SOTA-AD}) on the experimental setup and the same evaluation metrics proposed in \citep{ad_survey, deepant, multivariate_benchmark}. 
The baseline models are: ARIMA,  
OCSVM \cite{multivariate_benchmark}, LSTM \citep{ad_survey, deepant}, 
OmniAnomaly  \citep{SMDataset},  
TanoGAN \citep{bashar2020tanogan},  TadGAN \citep{tadgan},  
and DeepAnT \citep{deepant}.
STRIC outperforms most of the other methods based on statistical models and based on DNNs. 
}
\label{tab:ad_results_appendix}
    \begin{tabularx}{\textwidth}{p{\sidedescwidth} p{\modelcolwidth} *{6}{C}}
         \topleftdesc{Models}{14}
         \textbf{F1 
         }   & {\bf Yahoo A1 } & {{\bf Yahoo A2}} & {\bf Yahoo A3} & {\bf Yahoo A4} &  {\bf NAB Tweets} &  {\bf NAB Traffic}  \\
            \cmidrule(lr){2-8}
        & ARIMA          & 0.35 &	0.83	& 0.88 &	\textbf{0.70}&	0.57	& 0.57 \\
        & OCSVM          & 0.34 &  0.90    & 0.64 &          0.64    &  0.49    & 0.43 \\ 
        &  LSTM          & 0.44 & 0.97 & 0.72 & 0.59 & & \\
        & OmniAnomaly    & 0.47 &	0.95	& 0.80 &	0.64	&0.69&	0.70 \\ 
        & TanoGAN        & 0.41 &	0.86 &	0.59 & 0.63 & 0.54 & 0.51 \\
        &  TadGAN        & 0.40 &  0.87 & 0.68 & 0.60 & 0.61 & 0.49 \\
        &  DeepAnT       & 0.46 & 0.94 &  0.87 &  {0.68} & &  \\
        &  STRIC  (ours) & \textbf{0.48} & \textbf{0.98} & \textbf{0.89} & {0.68} & \textbf{0.71} & \textbf{0.73} \\
    \end{tabularx}
\end{table*}

\textbf{Comparison with SOTA methods:} We tested our model against other SOTA methods (\Cref{tab:ad_results}) in a comparable experimental setup. In particular, we chose comparable window lengths and architecture sizes (same order of magnitude of the number of parameters) to make the comparison as fair as possible. For classical anomaly detection methods we use hyper-parameters search used described in Appendix C of \cite{multivariate_benchmark}. For the hyper-parameters details of any deep learning based SOTA method we use, we refer to the relative cited reference. 
We point out that while the window length is a critical hyper-parameter for the accuracy of many methods, our architecture is robust w.r.t. choice of window length: thanks to our fading regularization, the user is required only to choose a window length larger than the optimal one and then our automatic complexity selection is guaranteed to find the optimal model complexity given the available data \Cref{sec:automatic_complexity_determination}. 

\textbf{Anomaly scores}: When computing the F-score we use the predictions of the CUMSUM detector which we collect as a binary vector whose length is the same as the number of available data. Ones are associated to the presence of an anomalous time instants while zeros are associated to normality.

When computing the AUC we need to consider a continuous anomaly score, therefore the zero-one encoded vector from the CUMSUM is not usable. 
We compute the anomaly scores for each time instant as the estimated likelihood ratios. Since we write the likelihood ratio as $\frac{\pf}{\pp}$, it is large when data does not come from $\pp$ (which we consider the reference distribution).

\begin{table*}
\centering
\small
\setlength{\tabcolsep}{5pt}
\vspace{-.3cm}
\caption{\textbf{Ablation study on the RMSE of prediciton errors}: We compare Test error and Generalization Gap (\gengap) of a standard TCN model with our STRIC predictor and some variation of it (using the same training hyper-parameters). Standard deviations are given in \Cref{tab:ablation_study_appendix}.
}
\label{tab:gen_gap_ablation_study_appendix}
    \begin{tabularx}{\textwidth}{p{\sidedescwidth} p{\modelcolwidth} *{8}{C}}
         \topleftdesc{Datasets}{13}
            & \multicolumn{2}{c}{\bf TCN} & \multicolumn{2}{c}{{\bf TCN + Linear}} & \multicolumn{2}{c}{\bf TCN + Fading} &
            \multicolumn{2}{c}{\bf STRIC pred} \\
          \cmidrule(lr){3-4} \cmidrule(lr){5-6} \cmidrule(lr){7-8} \cmidrule(lr){9-10}
         &  & Test & \gengap & Test & \gengap & Test & \gengap & Test & \gengap \\
            \cmidrule(lr){2-10}
        & Yahoo A1     & 0.92 & 0.82 & 0.88 & 0.78        & 0.92 & 0.48    & \best{0.62} & \best{0.19} \\
        & Yahoo A2     & 0.82 & 0.71 & 0.35 & 0.22        & 0.71 & 0.50    & \best{0.30} & \best{0.16} \\
        & Yahoo A3     & 0.43 & 0.30 & \best{0.22} & 0.06 & 0.40 & 0.25    & \best{0.22} & \best{0.03} \\  
        & Yahoo A4     & 0.61 & 0.46 & 0.35 & 0.16        & 0.55 &   0.38  & \best{0.24} & \best{0.01} \\ 
        & CO2 Dataset  & 0.62 & 0.48 & 0.45 & 0.30        & 0.61 &   0.43  & \best{0.41} & \best{0.08} \\
        & NAB Traffic  & 1.06 & 1.03 & 1.00 & 0.96        & 0.93 &   0.31  & \best{0.74} & \best{0.11} \\
        & NAB Tweets   & 1.02 & 0.84 & 0.98 & 0.78        & 0.83 &   0.36  & \best{0.77} & \best{0.07} \\
    \end{tabularx}
\end{table*}
\begin{table*}[ht]
\centering
\small
\setlength{\tabcolsep}{5pt}
\vspace{-.3cm}
\caption{\textbf{Ablation study on the RMSE of prediciton errors with standard deviation on 10 different seeds}: We compare a standard TCN model with our STRIC predictor and some variation of it (using the same train hyper-parameters). 
The effect of adding a linear interpretable model before a TCN improves generalization error.
Fading regularization has a beneficial effect in controlling the complexity of the TCN model and reducing the generalization gap.
}
\label{tab:ablation_study_appendix}
\tiny
    \begin{tabularx}{\textwidth}{p{0.2cm} p{1.5 cm} * {8}{C}}
         \topleftdesc{Datasets}{13}
            & \multicolumn{2}{c}{\bf TCN} & \multicolumn{2}{c}{{\bf TCN + Linear}} & \multicolumn{2}{c}{\bf TCN + Fading} &
            \multicolumn{2}{c}{\bf STRIC pred} \\
          \cmidrule(lr){3-4} \cmidrule(lr){5-6} \cmidrule(lr){7-8} \cmidrule(lr){9-10}
         &  & Train & Test & Train & Test & Train & Test & Train & Test \\
            \cmidrule(lr){2-10}
        & Yahoo A1     & \best{0.10} $\pm$ 0.06 & 0.92 $\pm$ 0.06 & \best{0.10} $\pm$ 0.03 & 0.88 $\pm$ 0.03 & 0.44 $\pm$ 0.03 & 0.92 $\pm$ 0.03 & 0.43 $\pm$ 0.02 & \best{0.62} $\pm$ 0.02 \\
        & Yahoo A2     & \best{0.11} $\pm$ 0.02 & 0.82 $\pm$ 0.02 & 0.13 $\pm$ 0.01 & 0.35 $\pm$ 0.02 & 0.20        $\pm$ 0.01 & 0.71 $\pm$ 0.01 & 0.14 $\pm$ 0.01 & \best{0.30} $\pm$ 0.01 \\
        & Yahoo A3     & \best{0.13} $\pm$ 0.01 & 0.43 $\pm$ 0.01 & 0.16 $\pm$ 0.01 & \best{0.22} $\pm$ 0.01 & 0.15 $\pm$ 0.01  & 0.40 $\pm$ 0.01 & 0.19 $\pm$ 0.01 & \best{0.22} $\pm$ 0.01 \\
        & Yahoo A4     & \best{0.15} $\pm$ 0.01 & 0.61 $\pm$ 0.01 & 0.19 $\pm$ 0.01 & 0.35 $\pm$ 0.01 & 0.17 $\pm$ 0.01        & 0.55 $\pm$ 0.01 & 0.23 $\pm$ 0.01 & \best{0.24} $\pm$ 0.01 \\
        & CO2 Dataset & \best{0.14} $\pm$ 0.02 & 0.62 $\pm$ 0.02 & 0.15 $\pm$ 0.02    & 0.45 $\pm$ 0.02 & 0.18      $\pm$ 0.03 & 0.61 $\pm$ 0.03 & 0.33 $\pm$ 0.01 & \best{0.41} $\pm$ 0.01 \\
        & NAB Traffic  & \best{0.03} $\pm$ 0.01 & 1.06 $\pm$ 0.02 & 0.04 $\pm$ 0.01 & 1.00 $\pm$ 0.02 & 0.62       $\pm$ 0.01 & 0.93 $\pm$ 0.01 & 0.63 $\pm$ 0.02 & \best{0.74} $\pm$ 0.02 \\
        & NAB Tweets & \best{0.18}   $\pm$ 0.05 & 1.02 $\pm$ 0.05 & 0.20 $\pm$ 0.05 & 0.98     $\pm$ 0.05 & 0.47 $\pm$ 0.02 & 0.83 $\pm$ 0.02 & 0.70 $\pm$ 0.01 & \best{0.77} $\pm$ 0.01 \\
    \end{tabularx}
\end{table*}

\subsection{Ablation study}\label{sec:ablation_study_appendix}
In \Cref{fig:interpretable_plots_vs_TCN_all_exps} we show different metrics based on the predictor's RMSE (training, test and generalization gap) as a function of the memory of the predictor. We test our fading regularization on a variety of different datasets. In all situations fading regularization helps improving test generalization and preserving the generalization gap (by keeping it constant) as the model complexity increases. All plots show confidence intervals around mean values evaluated on 10 different random seeds.

In \Cref{tab:ablation_study_appendix} we extend the results by adding uncertainties (measured by standard deviations on 10 different random seeds) to the values of train and test RMSE on different ablations of STRIC. Despite the high variability across different datasets STRIC achieves the most consistent results (smaller standard deviations both on training and testing). 
In \Cref{tab:ablation_study}, we report the test RMSE prediction errors and the RMSE generalization gap (\ie difference between test and training RMSE prediction errors)
for different datasets while keeping all the training and models parameters the same (as done in the main text).
The addition of the linear interpretable model before the TCN slightly improves the test error.
We note this effect is more visible on A2, A3, A4, mainly due to the non-stationary nature of these datasets and the fact that TCNs do not easily approximate trends \citep{ad_survey} (we further tested this in \Cref{sec:ablation_study_appendix}).
While STRIC generalization is always better than a standard TCN model and STRIC's ablated components, we note that applying Fading memory regularization alone to a standard TCN does not always improve generalization (but never decreases it): this highlights that the benefits of combining the linear module and the fading regularization together are not a trivial `sum of the parts'. Consider for example Yahoo A1: STRIC achieves 0.62 test error, the best ablated model (TCN + Linear) 0.88, while TCN + Fading does not improve over the baseline TCN. A similar observation holds for the CO2 Dataset. 
Fading regularization might not be beneficial (nor detrimental) for time series containing purely periodic components which correspond to infinite memory systems (systems with unitary fading coefficient). In such cases the interpretable module is essential in removing the periodicities and providing the regularized non-linear module (TCN + Fading) with an easier to model residual signal.
We refer to \Cref{fig:interpretable_plots} (first column) for a closer look on a typical time series in CO2 dataset, which contains a periodic component that is captured by the seasonal part of the interpretable model. 
To conclude, our proposed fading regularization has (on average) a beneficial effect in controlling the complexity of a standard TCN model and reduces its generalization gap ($\approx 40\%$ reduction).
Moreover, coupling fading regularization with the interpretable module guarantees the best generalization.

Finally, in \Cref{tab:sensitivity_study_appendix} we show the effects on different choices of the predictor's memory $n_{\pred}$ and length of the anomaly detectors windows $n_{\detector}$ on the detection performance of STRIC. Note both F-score and AUC are highly sensible to the choice of $n_{\detector}$: the best results are achieve for small windows. On the other hand when $n_{\detector}$ is large the performance drops. This is due to the type of anomalies present in the Yahoo benchmark: most of the them can be considered to be point anomalies. In fact, as we showed in \Cref{sec:subsapce_likelihood_ratio_estimation_appendix}, our detector is less sensible to point anomalies when a large window $n_{\detector}$ is chosen.

In \Cref{tab:sensitivity_study_appendix} we also report the reconstruction error of the optimal predictor given it's memory $n_{\pred}$. Note small memory in the predictor introduce modelling bias (higher training error) while a large memory does not (thanks to fading regularization). As we observed in  \Cref{sec:subsapce_likelihood_ratio_estimation_appendix} better predictive models provide the detection module with more discriminative residuals: the downstream detection module achieves better F-scores and AUC.

\begin{table*}[ht]
\centering
\small
\setlength{\tabcolsep}{5pt}
\vspace{-.3cm}
\caption{\textbf{Sensitivity of STRIC to hyper-parameters}: We compare STRIC on different anomaly detection benchmarks datasets using different hyper-parameters: memory of the predictor $n_{\pred}$ and length of anomaly detector windows $n_p=n_f=n_{\detector}$. 
}
\label{tab:sensitivity_study_appendix}
    \begin{tabularx}{\textwidth}{p{\sidedescwidth} p{3.3cm} *{8}{C}}
         \topleftdesc{Models}{10}
        & \multicolumn{2}{c}{\bf Yahoo A1 } & \multicolumn{2}{c}{{\bf Yahoo A2}} & \multicolumn{2}{c}{\bf Yahoo A3} & \multicolumn{2}{c}{\bf Yahoo A4} \\
              \cmidrule(lr){3-4} \cmidrule(lr){5-6} \cmidrule(lr){7-8} \cmidrule(lr){9-10}
         &  & F1 & AUC & F1 & AUC & F1 & AUC & F1 & AUC \\
            \cmidrule(lr){2-10}
        & $n_{\pred}=10$, $n_{\detector}=2$       &  0.45 &  0.89 &   0.63   & 0.99  & 0.87   & 0.99 & 0.64 & 0.89    \\
        & $n_{\pred}=100$, $n_{\detector}=2$     &   \textbf{0.48}  &  \textbf{0.9308}    &   \textbf{0.98}   & \textbf{0.9999}   &   \textbf{0.99} & \textbf{0.9999} & \textbf{0.68} & \bf 0.9348  \\
        & $n_{\pred}=10$, $n_{\detector}=20$     &  0.10  & 0.58  &   0.63  & 0.99  &  0.47  & 0.83   & 0.37 & 0.72     \\
        & $n_{\pred}=100$, $n_{\detector}=20$   &  0.10  & 0.55   &   \textbf{0.98}  & \textbf{0.9999}    &   0.49 & 0.86 & 0.35 &  0.76 \\
    \end{tabularx}
    \begin{tabularx}{\textwidth}{p{\sidedescwidth} p{3.3cm} *{8}{C}}
         \topleftdesc{Models}{8}
        & \multicolumn{2}{c}{\bf Yahoo A1 } & \multicolumn{2}{c}{{\bf Yahoo A2}} & \multicolumn{2}{c}{\bf Yahoo A3} & \multicolumn{2}{c}{\bf Yahoo A4} \\
              \cmidrule(lr){3-4} \cmidrule(lr){5-6} \cmidrule(lr){7-8} \cmidrule(lr){9-10}
         &  & Train & Test & Train & Test & Train & Test & Train & Test \\
            \cmidrule(lr){2-10}
        & $n_{\pred}=10$      &  0.44 &  0.62  &   0.16   & 0.31  & 0.22   & 0.23 & 0.25 & 0.26    \\
        & $n_{\pred}=100$      &  0.42  &  0.61 &   0.14   & 0.30   &   0.19 & 0.22 & 0.23 & 0.24  \\
    \end{tabularx}
\end{table*}

\subsubsection{Comparison TCN vs STRIC}\label{sec:TCN_vs_TRIAD_appendix}
In this section we show standard non-linear TCN without regularization and proper inductive bias might not generalize on non-stationary time series (e.g. time series with non zero trend component) and TCN architecture. In \Cref{fig:interpretable_plots_vs_TCN_appendix} we compare the prediciton errors of a standard TCN model against our STRIC on some simple time series in the A3 Yahoo dataset (the signal is the sum of a sinusoidal and a trend component). We train both models using the same optimization hyper-parameters as described in previous section. Note a plain TCN does not necessarily capture the trend component in the test set. 
\begin{figure*}[ht]
    \centering
    \includegraphics[width=0.99\linewidth]{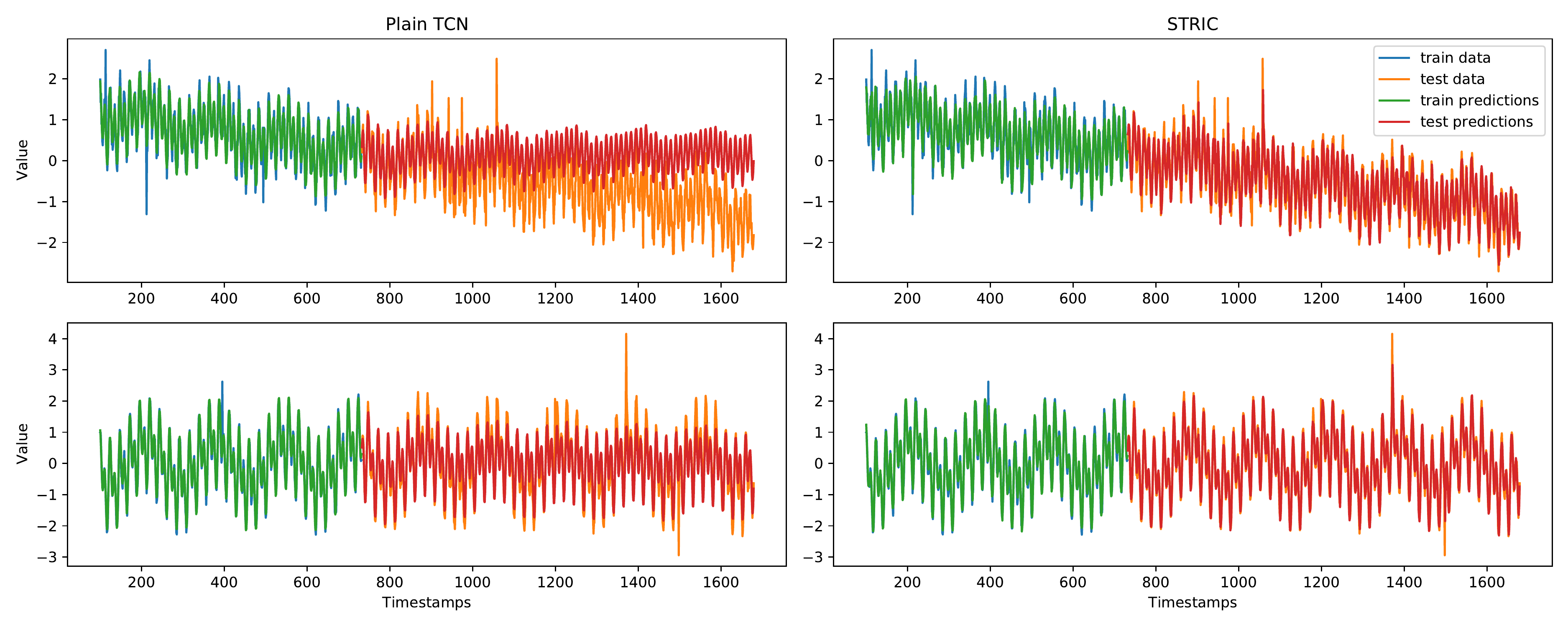}
    \caption{We compare an off-the-shelf TCN against STRIC (time series predictor) on the Yahoo dataset A3 Benchmark. Note the standard TCN overfits compared to STRIC: the standard TCN does not handle correctly the trend component of the signal (\textbf{First row}). If we consider a time series without trend, the standard TCN model performs better but overfitting is still present. In particular the generalization gap (measured using squared reconstruction error) for the two models is: Standard TCN 0.3735 and STRIC 0.0135. 
    }
    \label{fig:interpretable_plots_vs_TCN_appendix}
\end{figure*}
\begin{figure*}[ht]
    \centering
    \includegraphics[width=0.99\linewidth]{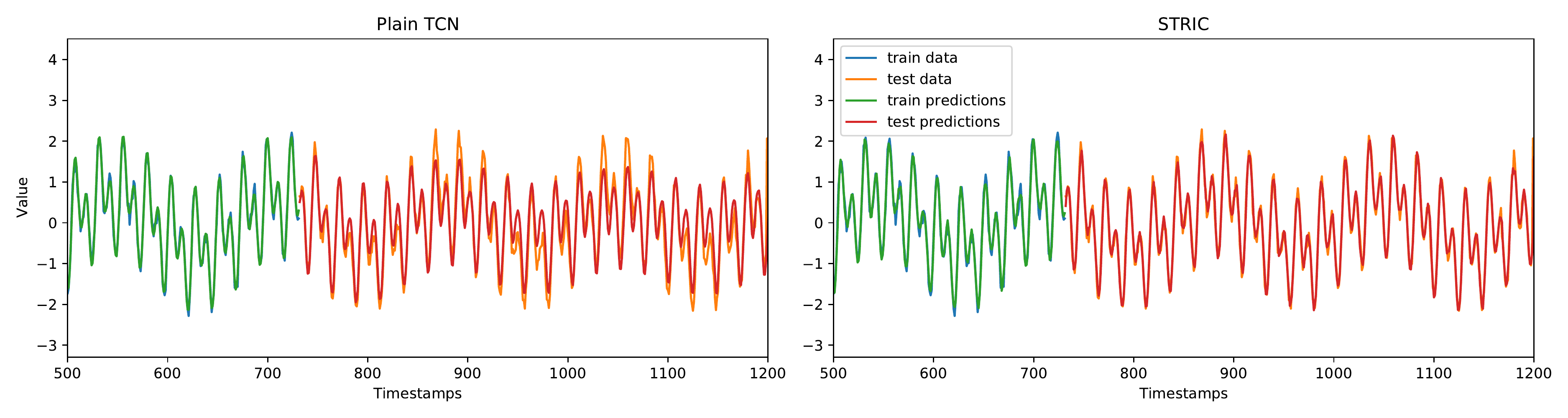}
    \caption{Zoom on the second row of panels in \Cref{fig:interpretable_plots_vs_TCN_appendix}. We show the interface between train and test data both on a plain TCN and on our STRIC predictor. A plain TCN overfits w.r.t.\ STRIC also when not trend is present.}
    \label{fig:interpretable_plots_vs_TCN_zoom_appendix}
\end{figure*}

\subsection{STRIC vs SOTA Anomaly Detectors}\label{sec:discussion-on-SOTA-AD}
In this section we further expand the discussion on the main differences between STRIC and other SOTA anomaly detectors by commenting results obtained in \Cref{tab:ad_results}.
Both for F1 and AUC we report the following for each comparing SOTA method: $\frac{\text{AUC}_\text{method}-  \text{AUC}_\text{STRIC}}{1 - \text{AUC}_\text{STRIC}}\cdot100$ (similarly for F1).

\textbf{Comparison with classic anomaly detection methods}
To begin with, STRIC outperforms `traditional' methods (LOF and One-class SVM) which are considered as baselines models for comparing time series anomaly detectors.
Interestingly, STRIC does not achieve the optimal F1 compared to linear models on Yahoo A4. The ability of linear models to outperform non-linear ones on Yahoo A4 is known in the literature (e.g.\ in \citet{tadgan} any non-linear model is outperformed by AR/MA models of the proper complexity).
The main motivation for this is that modern (non-linear) methods tend to overfit on Yahho A4 and therefore generalization is usually low.
Instead, thanks to fading regularization and model architecture, STRIC does not exhibit overfitting despite having larger complexity than SOTA linear models used in A4. To conclude, we believe that STRIC merges both the advantages of simple and interpretable linear models and the flexibility of non-linear ones while discounting their major drawbacks: lack of flexibility of linear models and lack of interpretability and overfitting of non-linear ones (see \Cref{sec:discussion-on-SOTA-AD} for a more in depth discussion).

\textbf{Comparison with other Deep Learning based methods}: STRIC outperforms most of the SOTA Deep Learning based methods reported in \Cref{tab:ad_results}: TadGAN, TAnoGAN, DeepAnT and DeepAR (the last one is a SOTA time series predictor). 
Note the relative improvement of STRIC is higher on the Yahoo dataset where statistical models outperforms deep learning based ones. We believe this is due to both fading regularization and the seasonal-trend decomposition performed by STRIC.

Despite the general applicability of GOAD \citep{GOAD} this method has not been designed to handle time series, but images and tabular data. “Geometric” transformations which have been considered in GOAD and actually have inspired it (rotations, reflections, translations) might not be straightforwardly applied to time series. Nevertheless, while we have not been able to find in the literature any direct and principled extension of this work to the time series domain, we have implemented and compared against \citep{GOAD} by extending the main design ideas of GOAD to time-series. So that we applied their method on lagged windows extracted from time series (exploiting the same architectures proposed for tabular data case with some minor modifications). We report the results we obtained by running the GOAD’s official code on all our benchmark datasets. Overall, STRIC performs (on average) 70\% better than GOAD on the Yahoo dataset and 15\% better on the NAB dataset.

\subsubsection{Details on the NYT experiment}\label{sec:NYT_appendix}
We qualitatevly test STRIC on a time series consisting of BERT embeddings \citep{BERT} of New York Times articles \citep{NYT} from 2000 to 2007.
We set $\nf = \nf = 30$ days, to be able to detect change-point anomalies that altered the normal distribution of news articles for a prolonged period of time. 
Without any human annotation, STRIC is able to detect major historical events such as the 9/11 attack, the 2004 Indian Ocean tsunami, and U.S.\ elections (\Cref{fig:nyt}).
Note that we do not carry out a quantitative analysis of STRIC's predictions, as we are not aware of any ground truth or metrics for this benchmark, see for example \citet{NYT-baseline}.
Additional details and comparison with a baseline model built on PCA are given in \Cref{sec:NYT_appendix}.

\begin{figure}
    \centering
    \includegraphics[width=0.99\linewidth]{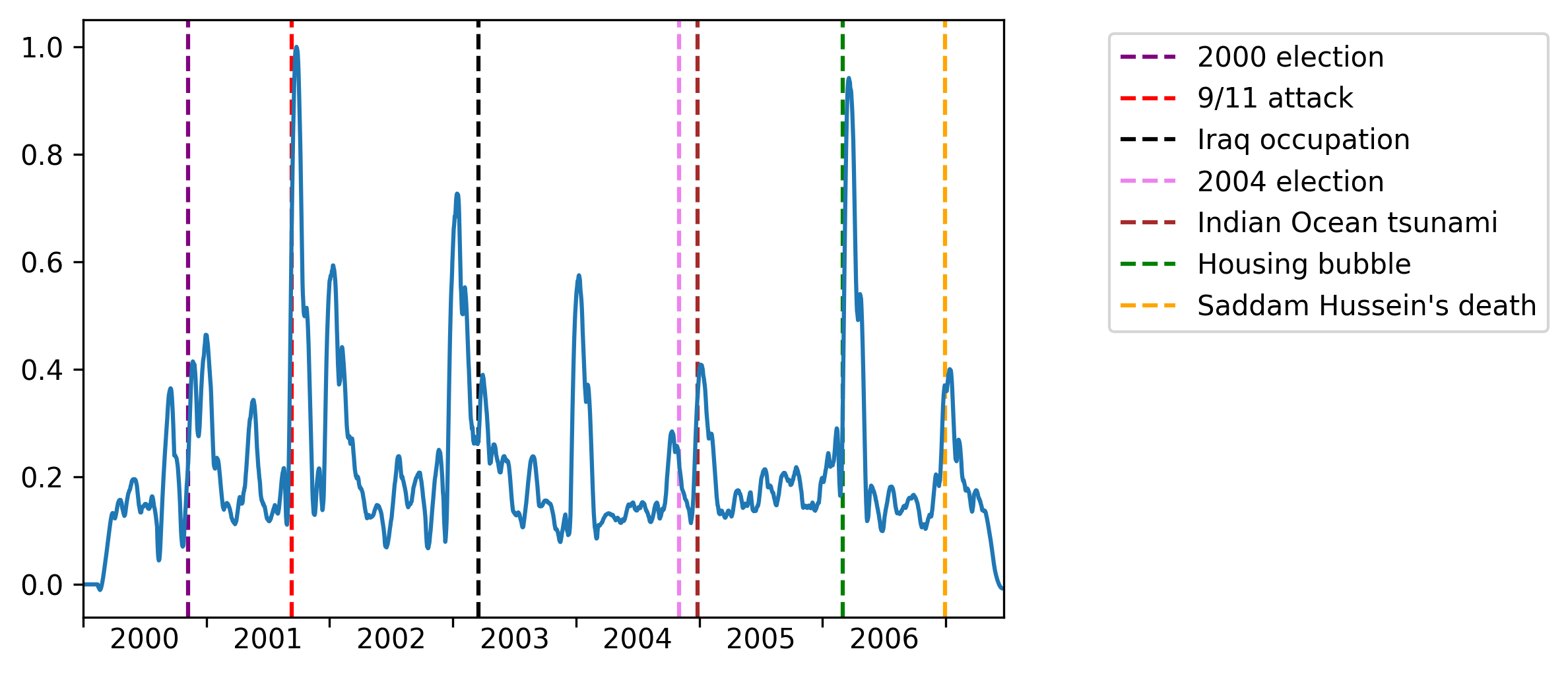}
    \caption{\textbf{Anomaly score on the New York Times dataset.} Our method finds anomalies in a complex time series consisting of the BERT embedding of articles from the New York Times. Peaks in the anomaly score correspond to historical events that sensibly changed the content of the news cycle.}
    \label{fig:nyt}
\end{figure}

\begin{figure*}[ht]
    \centering
    \includegraphics[height=3.9cm]{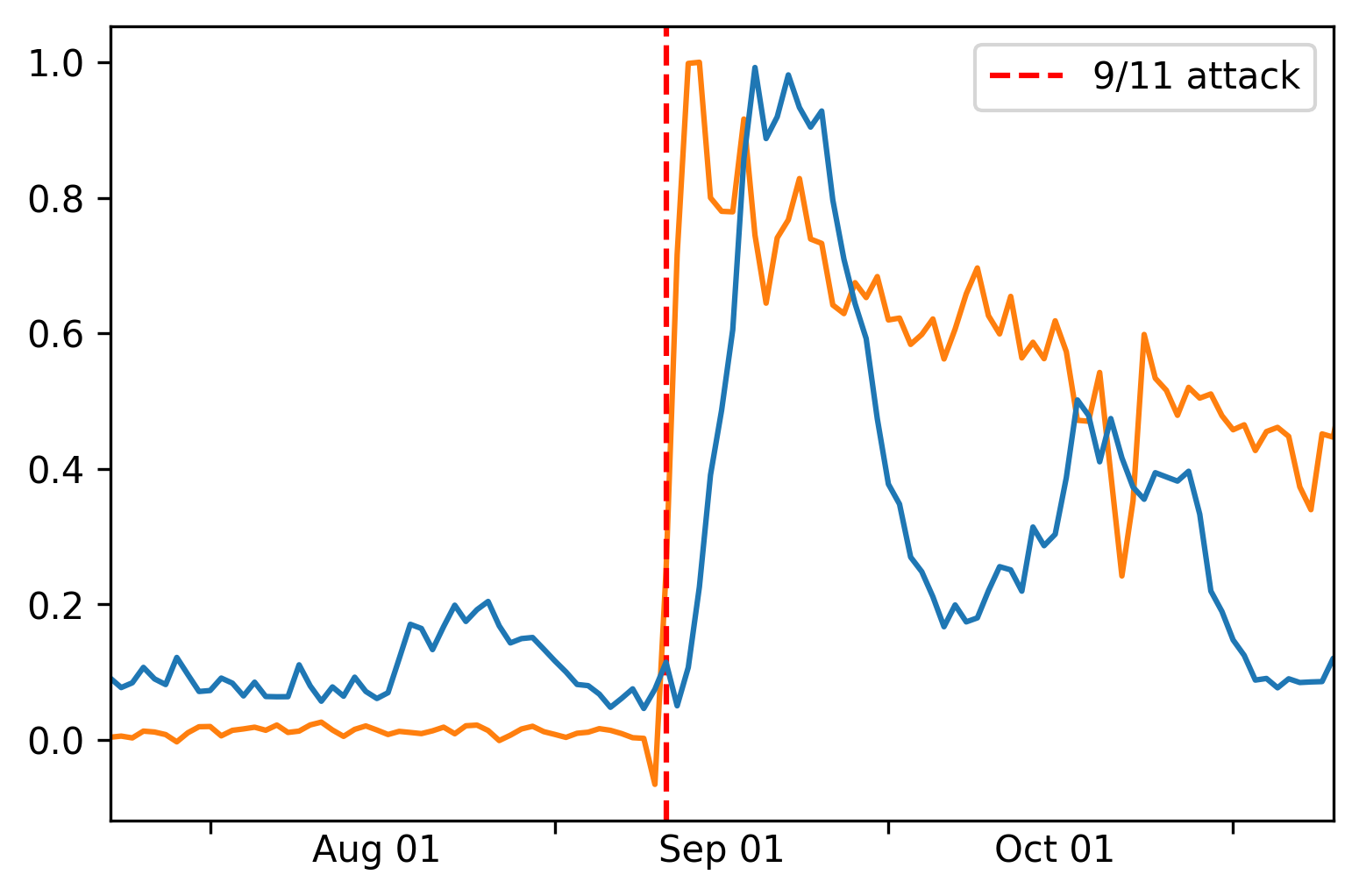}
    \;
    \includegraphics[height=3.9cm]{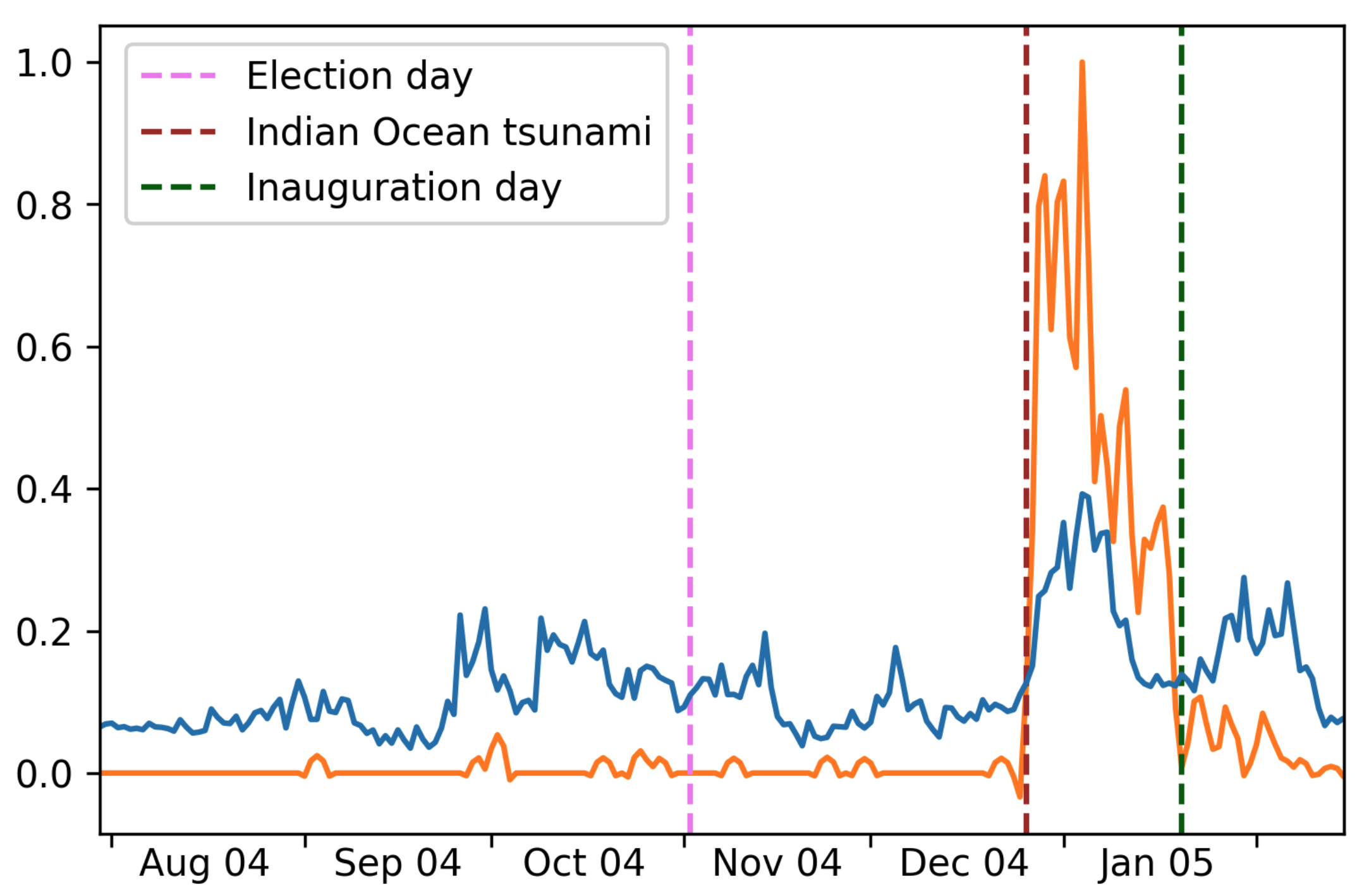}
    \caption{A closer look at some of the change-points detected by STRIC. \textbf{Left}: Normalized anomaly score (blue line) and normalized frequency of the ``Terrorism'' descriptor (orange line) around the 9/11 attack.
    \textbf{Right}: Normalized anomaly score (blue line) and normalized frequency of the ``Earthquakes'' descriptor (orange line) in the second half of 2004 and beginning of 2005.
    The 2004 U.S.\ election causes an increase in the anomaly score, but the most significant change-point occurs after the Indian Ocean tsunami.
    }
    \label{fig:nyt-zoom}
\end{figure*}
\Cref{fig:nyt} shows the normalized anomaly score computed by STRIC on the NYT dataset, following the setup described in \Cref{sec:NYT_dataset_BERT_discussion}.
Some additional insights can be gained by zooming in around some of the detected change-points.
In \Cref{fig:nyt-zoom} (left), we see that the anomaly score (blue line) rapidly increases immediately after the 9/11 attack and reaches its peak some days later.
Such delay is inherently tied to our choice of time scale, that privileges the detection of prolonged anomalies as opposed to single-day anomalies (which are not meaningful due to the high variability of the news content).
The change-point which occurs the day after the 9/11 attack is reflected by a sudden increase of the relative frequency of article descriptors such as ``Terrorism'' (orange line).
Article descriptors are annotated in the NYT dataset, but they are not given as input to STRIC so that we do not rely on any human annotations.
However, they can help interpreting the change-points found by STRIC.

In \Cref{fig:nyt-zoom} (right), we can observe that the anomaly score (blue line) is higher in the months around the 2004 U.S.\ election and immediately after the inauguration day.
However, the highest values for the anomaly score occur around the end of 2004, shortly after the Indian Ocean tsunami.
Indeed, this is reflected by an abrupt increase of the frequency of descriptors like ``Earthquakes'' (orange line) and ``Tsunami''.

We note this experiment is qualitative and unfortunately we are not aware of any ground truth or metrics (e.g., in \citet{NYT-baseline} a similar qualitative result has been reported on the NYT dataset). 
We therefore tested STRIC against a simple baseline which uses PCA on BERT features and a threshold to detect anomalies. Despite being a simple baseline, this method prooved to be highly applied in practice due to its simplicity \citep{ad_PCA_survey}. 
The PCA + threshold baseline is able to pick up some events (2000 election, 9/11 attack, housing bubble) but is otherwise more noisy than STRIC's anomaly score. This is likely due to the lack of a modeling of seasonal/periodic components. For instance, the anomaly score of the simple baseline contains many false alarms which are related to normal weekly periodicity that is not easily modeled by the baseline. This does not affect STRIC’s predictions, since normal weekly periodicity is directly modeled and identified as normal behaviour.

\end{document}